\documentclass[11pt, a4paper, logo, copyright]{pluralisresearchiclr}

\usepackage{amsmath,amsfonts,bm}

\def\eqref#1{equation~\ref{#1}}

\def\1{\bm{1}}

\DeclareMathAlphabet{\mathsfit}{\encodingdefault}{\sfdefault}{m}{sl}
\SetMathAlphabet{\mathsfit}{bold}{\encodingdefault}{\sfdefault}{bx}{n}

\newcommand{\R}{\mathbb{R}}

\pdftrailerid{redacted}

\makeatletter
\renewcommand\bibentry[1]{\nocite{#1}{\frenchspacing\@nameuse{BR@r@#1\@extra@b@citeb}}}
\makeatother

\usepackage{kantlipsum, lipsum}
\usepackage{dsfont}
\usepackage{algorithm}
\usepackage{algpseudocode}
\usepackage{adjustbox}
\usepackage{enumitem}
\usepackage{multirow}
\usepackage{nicefrac}
\usepackage{float}
\usepackage[utf8]{inputenc} 
\usepackage[T1]{fontenc}    
\usepackage{hyperref}       
\usepackage{url}            
\usepackage{booktabs}       
\usepackage{amsfonts}       
\usepackage{subcaption}     
\usepackage{amssymb}        
\usepackage{amsthm}         
\usepackage{nicefrac}       
\usepackage{microtype}      
\usepackage{xcolor}         
\usepackage{amsmath}

\usepackage{wrapfig}
\usepackage{graphicx}

\newtheorem{theorem}{Theorem}
\newtheorem{proposition}{Proposition}
\newtheorem{lemma}{Lemma}
\theoremstyle{remark}

\newtheorem{corollary}{Corollary}

\usepackage[most]{tcolorbox}
\tcbuselibrary{skins, breakable}

\newtcolorbox{informaltheorembox}{
  enhanced,
  breakable,
  colback=blue!2,
  colframe=blue!50!black,
  sharp corners=south,
  left=6pt, right=6pt, top=6pt, bottom=6pt
}

\newcolumntype{R}[2]{%
    >{\adjustbox{angle=#1,lap=\width-(#2)}\bgroup}%
    l%
    <{\egroup}%
}

\usepackage{pifont}

\usepackage{subcaption}
\usepackage{url}

\graphicspath{{figures/}}

\title{Mixtures of Subspaces for Bandwidth Efficient Context Parallel Training}

\correspondingauthor{author@email.com}

\keywords{Decentralized training, Context parallalism} 

\reportnumber{} 

\renewcommand{\today}{} 

\author[1]{Sameera Ramasinghe}
\author[1]{Ajanthan Thalaiyasingam}
\author[1]{Hadi Mohaghegh Dolatabadi}
\author[1]{Gil Avraham}
\author[1]{Violetta Shevchenko}
\author[1]{Yan Zuo}
\author[1]{Chamin Hewa Koneputugodage}
\author[1]{Alexander Long}

\affil[1]{Pluralis Research}

\begin{abstract}
    Pretraining language models with extended context windows enhances their ability to leverage rich information during generation. Existing methods split input sequences into chunks, broadcast them across multiple devices, and compute attention block by block which incurs significant communication overhead. While feasible in high-speed clusters, these methods are impractical for decentralized training over low-bandwidth connections. We propose a compression method for communication-efficient context parallelism in decentralized settings, achieving a remarkable compression rate of over $95\%$ with negligible overhead and no loss in convergence. Our key insight is to exploit the intrinsic low-rank structure of activation outputs by dynamically constraining them to learned mixtures of subspaces via efficient reparameterizations. We demonstrate scaling billion-parameter decentralized models to context lengths exceeding $100\mathrm{K}$ tokens on networks as slow as $300\mathrm{Mbps}$, matching the wall-clock convergence speed of centralized models on $100\mathrm{Gbps}$ interconnects.
\end{abstract}

\begin{document}
\maketitle

\section{Introduction}
\label{sec:introduction}
Rapid scaling of large language models (LLMs) has made distributed training a necessity \cite{krizhevsky2012imagenet, kolesnikov2020big, dubey2024llama, ren2023pangu}. As both model size and context length continue to grow, efficient training increasingly depends on parallelization across multiple devices. Traditional distributed training paradigms assume high-bandwidth, low-latency interconnects, typically available in centralized data centers. In contrast to such centralized settings, the emerging paradigm of \emph{decentralized training} \cite{yuan2022decentralized, ryabinin2023swarm, lian2017can, koloskova2019decentralized, koloskova2020unified} enables collaborative and democratized machine learning by distributing computation across heterogeneous, geographically dispersed nodes over the Internet, without requiring specialized networking hardware or centralized orchestration.


However, decentralized training presents a core technical challenge: \emph{limited communication bandwidth}. When nodes are connected via commodity networks, communication quickly becomes a bottleneck. Most prior work, has addressed this issue in the context of distributed data parallelism (DDP), where each node maintains a full model replica and synchronizes gradients during training. A variety of bandwidth-efficient techniques, such as gradient quantization \cite{wu2018error, nassif2023quantization, tang2018decentralization}, sparsification \cite{wang2021error, shi2019distributed, singh2024efficient}, and delayed synchronization \cite{ryabinin2021moshpit, douillard2023diloco, liu2024asynchronous, douillard2025streaming}, have been proposed to reduce overhead in this setting. Pipeline parallelism (PP)~\cite{huang2019gpipe}, where model layers are partitioned across devices, has also been explored to a limited extent \cite{ramasinghe2025beyond, wang2021pufferfish}.

A significantly more challenging -- and, to our knowledge, entirely unexplored -- setting is \emph{context parallelism} (CP) in decentralized environments. CP has become critical for pretraining frontier LLMs, as it enables efficient training with extremely long sequences, often exceeding 100K tokens (e.g., \textsc{LlaMA} 3 \cite{dubey2024llama}: $130\mathrm{K}$, \textsc{LlaMA} 4: $256\mathrm{K}$, \textsc{DeepSeek} \cite{liu2024deepseek}: $128\mathrm{K}$, \textsc{Qwen} 3 \cite{yang2024qwen2}: $128\mathrm{K}$), thereby enhancing the model’s ability to capture long-range dependencies. In context-parallel training, each node processes a local chunk of the input and broadcasts its attention activations to all other nodes in every layer and at every step. This imposes substantial communication demands, as attention mechanisms require \emph{global access} to all key and value activations. While centralized systems handle this using high-bandwidth interconnects, all-to-all communication becomes prohibitively expensive in decentralized environments with low-bandwidth links.

In this work, we propose a method to \emph{drastically reduce} the communication required for context-parallel attention without sacrificing model quality, enabling decentralized systems connected via standard internet-grade links to match the convergence performance of centralized systems with datacenter-grade bandwidth. Our approach leverages the observation that attention activations (queries, keys, and values) often reside on low-dimensional manifolds. We exploit this structure by factorizing the attention weights so that outputs lie within dynamic mixtures of low-dimensional subspaces. To ensure convergence, we optimize the factored weights on a Riemannian product manifold and introduce an efficient reparameterization scheme that significantly reduces computational and communication overhead. Additionally, we provide theoretical guarantees on the expressivity and convergence of our method, offering principled justification for each design choice.




Crucially, this approach introduces only minor architectural changes with negligible training overhead, and these components can be \emph{removed after training}, yielding a standard transformer architecture compatible with existing inference infrastructure and downstream deployment frameworks. We employ our method up to billion-parameter scale models under various settings and demonstrate that our method achieves over \textbf{$\mathbf{95\%}$ communication compression} without harming performance, enabling training with long context windows across devices connected via commodity internet ($300\mathrm{Mbps}$), while matching the wall-clock convergence of centralized systems with high-speed interconnects ($100\mathrm{Gbps}$).


\section{Background and motivation}

\subsection{Context-parallel training and the communication bottleneck}
\label{sec:context-parallel}









We begin with a brief exposition on CP training and refer the reader to~\cite{dubey2024llama} for an extended read. Transformer attention requires each query to interact with all key-value pairs, resulting in a computational complexity that grows quadratically with context window. This becomes particularly prohibitive for long sequences, which makes parallelization strategies essential. In \emph{context-parallel} settings, the input sequence \(X \in \mathbb{R}^{n \times d}\), where $n$ is the context length and $d$ is the model dimension, is partitioned across \(m\) devices along the context dimension:

\[
X \;=\;
\bigl[X_{1}^{\top}\;\cdots\;X_{m}^{\top}\bigr]^{\top},
\qquad
X_{i}\in\mathbb{R}^{n_i\times d},
\quad
\sum_{i=1}^{m} n_i = n.
\]

Each device \(i\) computes local queries, keys, and values per head. For clarity we suppress the head index; all quantities are understood to be per attention head unless otherwise stated:
\[
Q_{i}=X_{i}W_q\in\mathbb{R}^{n_i\times d},\quad
K_{i}=X_{i}W_k\in\mathbb{R}^{n_i\times d},\quad
V_{i}=X_{i}W_v\in\mathbb{R}^{n_i\times d}.
\]
Computing attention locally requires global access to keys and values:
\[
K_{\text{g}}=\bigl[K_{1}^{\top}\;\cdots\;K_{m}^{\top}\bigr]^{\top}\!\in\mathbb{R}^{n\times d},
\qquad
V_{\text{g}}=\bigl[V_{1}^{\top}\;\cdots\;V_{m}^{\top}\bigr]^{\top}\!\in\mathbb{R}^{n\times d},
\]

Typically, CP performs (some form of) \emph{all-gather}, where each device broadcasts its local \(K_{i}, V_{i}\) to form \(K_{\mathrm{g}}, V_{\mathrm{g}}\), incurring communication cost \(O(nd)\) per device where $d \ll n$. Recently proposed \emph{Ring Attention}~\citep{liu2024ringattention} pipelines this communication in a ring topology, incrementally exchanging local key-value blocks and computing partial attentions at each stage. Above methods fundamentally rely on the costly communication of large \(K, V\) matrices.

\subsection{Attention Outputs Exhibit Low-Rank Structure}
\label{sec:qkv_lowrank}
Our compression scheme is inspired by  the observation that the attention outputs of pretrained transformers lie on a low-dimensional manifold. To support this, we analyze publicly available checkpoints of large-scale pretrained LLMs and examine their attention activations. Fig.~\ref{fig:qkv_lowrank} presents an illustration of \textsc{LLaMA} $70\mathrm{B}$.
\begin{figure}
  \centering
  \includegraphics[width=0.5\textwidth]{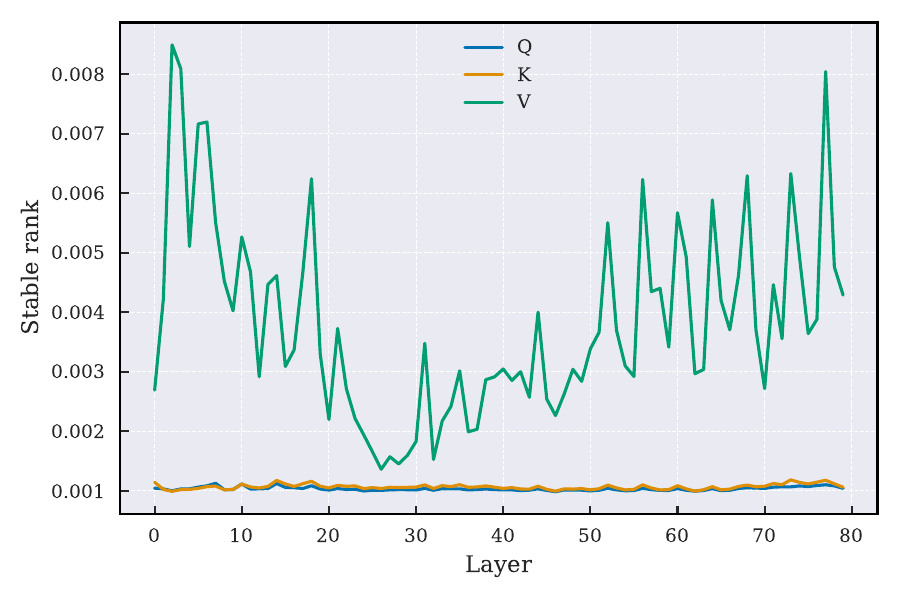}
  \caption{\textbf{Attention outputs of LlaMa-70B.}
Shown is the empirical rank of the $Q$, $K$, and $V$ activations, normalised by their maximum possible rank, for every layer of the official \textsc{LlaMa} $70\mathrm{B}$ checkpoint. All three projections are extremely low-rank: $Q$ and $K$ sit at roughly $0.1 \%$ of full rank, while $V$ is slightly larger at $\sim 0.5 \%$.}
  \label{fig:qkv_lowrank}
\end{figure}

Specifically, we measure the \emph{stable rank} of the query (\(Q\)), key (\(K\)), and value (\(V\)) activations across each attention layer. The stable rank of a matrix \( A \in \mathbb{R}^{n \times d} \) is defined as:
$
\mathrm{srank}(A) = \frac{\|A\|_F^2}{\|A\|_2^2},
$
where \( \|A\|_F \) denotes the Frobenius norm and \( \|A\|_2 \) the spectral norm. Unlike the conventional matrix rank -- which is highly sensitive to small perturbations and numerical noise -- the stable rank offers a robust, continuous measure of effective dimensionality. This makes it particularly suitable for characterizing learned neural representations, where numerous singular values are typically small yet non-zero due to noise or over-parameterization.

As depicted in Fig.~\ref{fig:qkv_lowrank}, the stable ranks of attention activations remain low across all layers. Interestingly, \( Q \) and \( K \) generally exhibit slightly lower ranks than \( V \), indicating a higher degree of compressibility \footnote{We only need to compress $K$ and $V$ since $Q$ can remain local.}. This observation underpins our approach, leveraging low-rank factorization for efficient compression. Further evidence of this phenomenon in other architectures is provided in Appendix~\ref{app:rank}. Next, we formalize this idea.

\section{Method}
\label{sec:method}

We now present our proposed method for efficient context-parallel transformer training. 
First, we formalize how the empirically observed low-rank structure in attention activations enables effective compression. Next, we explain why using a fixed subspace for compression can be overly restrictive, motivating our joint learning strategy that adaptively optimizes both the projection subspace and the attention weights (\S\ref{sec:joint_optimization}). We then introduce a computationally efficient reparameterization approach that maintains optimality guarantees while significantly reducing overhead (\S\ref{sec:reparam-geometry}). Finally, we describe how to reduce communication costs by dynamically compressing attention activations through per-chunk rotations and demonstrate how the model can seamlessly revert to a \emph{standard transformer architecture} at inference time (\S\ref{sec:comm-cost}–\ref{sec:unplug-projections}).


In \S.~\ref{sec:qkv_lowrank}, we saw that the \(Q\), \(K\), \(V\) activations of large pretrained transformers exhibit a pronounced low-rank structure (Fig.~\ref{fig:qkv_lowrank}). This finding implies that it is feasible to transmit only the low-dimensional components of these activations between devices, thereby achieving near-lossless compression in practice. Formally, let the columns of an orthonormal matrix \(U \in \mathbb{R}^{d \times r}\), with \(r \ll d\), span the dominant subspace of the activations. Rather than communicating the full local activation matrix \(Z = X^{(i)} W \in \mathbb{R}^{n_i \times d}\), where $Z \in \{K,V\}$ denotes key/value activations, $W \in \{W_k, W_v \}$ and $X$ is the input to the attention layer, we can transmit only its compressed representation:
$
Z_{\text{comp}} = X^{(i)} W U \;\in\; \mathbb{R}^{n_i \times r}.
$
The original activations can then be reconstructed at the receiving node as:
$
Z \approx Z_{\text{comp}} U^{\top}.
$
This compression method preserves all information within the subspace spanned by \(U\), and is  lossless when activations lie entirely in this subspace. Equivalently, this projection can be folded onto the attention weights and be interpreted as factorizing them into a low-rank representation:
$
W = B (U U^{\top}), \quad B \in \mathbb{R}^{d \times d}.
$


\paragraph{Sub‑optimality of a fixed subspace.} The formulation above implicitly assumes that an \emph{a priori} choice of \(U\) is sufficiently expressive for every layer and every chunk in every optimization stage.  
It is straightforward to see where this assumption can break down. Even if there is an optimal low-rank attention weight matrix, restricting weights to the form \( W = BUU^\top \) limits the search to the column space of \( U \). If this space does not contain the true optimum, the model may converge to a suboptimal solution. In short, fixing \( U \) can prevent the model from reaching the best possible performance.



\subsection{Joint Optimization over a Product Manifold}
\label{sec:joint_optimization}

To address the limitations of a fixed \(U\), we propose jointly optimizing factorization \( W = B\,U\,U^\top \). Specifically, we simultaneously learn both the subspace representation \(U\) and the matrix \(B\) on the product manifold:
$
\mathcal{M} \;=\; \mathbb{R}^{d\times d} \,\times\, \mathrm{St}(n,r).
$
Here, \(B \in \mathbb{R}^{d\times d}\) is optimized in standard Euclidean space, whereas \(U\) resides on the Stiefel manifold \(\mathrm{St}(n,r)\), where updates can be naturally performed via Riemannian gradient descent \footnote{The Stiefel manifold \(\mathrm{St}(d,r)\) is defined as the set of all \( d\times r \) matrices with orthonormal columns, formally given by \(\mathrm{St}(d,r) = \{U \in \mathbb{R}^{d\times r} : U^\top U = I_r\}\).}. The following result establishes that this joint optimization achieves linear convergence under gradient descent.



\begin{informaltheorembox}

\textbf{Convergence.} Let \(\Phi(W,\vartheta)\) be a smooth loss and consider the factorization $W = B\,U\,U^{\top}$ for attention weights where
   $B \in \mathbb{R}^{d\times d},\;
   U \in \mathrm{St}(d,r),\;$
   and $\vartheta \in \mathbb{R}^{p}$ denotes all other parameters.
Minimizing the reparameterized objective
\(\hat{\Phi}(B,U,\vartheta) = \Phi\bigl(BUU^{\top},\vartheta\bigr)\) over the product manifold
$
   \mathcal{M}
   := \mathbb{R}^{d\times d}\times\mathrm{St}(d,r)\times\mathbb{R}^{p}
$
with Riemannian gradient descent,
and under mild assumptions,
yields \emph{Q-linear} (geometric) convergence to a first-order
stationary point. For the formal result and proof, see Lemma \ref{thm:ProdManifoldConvergenceFull} (Appendix).
\end{informaltheorembox}

Note that since \( \| U \| = 1 \), the factorized objective remains Lipschitz smooth, and the convergence result naturally follows from the standard gradient descent theory on both Euclidean and Riemannian manifolds. We include a full proof in Appendix~\ref{app:theory} for completeness, explicitly treating the product manifold structure and assuming a Polyak--Łojasiewicz (PL) condition.



\subsection{Reducing Computational Cost via Reparameterization of \(U\)}
\label{sec:reparam-geometry}

Direct optimisation of \(U\) on the Stiefel manifold \(\mathrm{St}(n,r)\) via Riemannian gradient descent provides strong theoretical guarantees but is costly: after every Euclidean update, \(U\) must be re-orthonormalised (the standard “retraction’’ on to the manifold), which is performed with a QR or SVD factorisation to restore \(U^{\top}U = I_r\). To mitigate this, we use an efficient reparameterization of \(U\) using a fixed orthonormal basis \(\overline{U}\) and a learnable rotation \(R(\theta)\in O(d)\):
\[
U(\theta) \;=\; R(\theta)\,\overline{U},
\]
where \(O(d)\) denotes the orthogonal manifold consisting of all \(d\times d\) orthonormal matrices. If the mapping \(\theta \mapsto R(\theta)\) is sufficiently expressive, rotations \(R(\theta)\) can fully parameterize \(O(d)\), preserving the representational power of the manifold while significantly reducing computational overhead.

\paragraph{Preservation of geometry and stationary points.}
Reparameterizing \(U\) as \(U(\theta)=R(\theta)\,\overline U\) moves the orthonormal constraint onto an unconstrained Euclidean variable \(\theta\), eliminating expensive QR/SVD steps and letting us run ordinary SGD/Adam in \(\theta\)-space. %
\textit{A natural concern is that this change of variables might distort the loss landscape and hinder optimization;} however, we show that this is not the case. %
The chain rule shows $\nabla_\theta\widehat\Phi(B,\theta,\vartheta)
      =D_\theta U(\theta)^{\!\top}\,
        \mathrm{grad}_U\Phi\!\bigl(B,U(\theta),\vartheta\bigr),$
so \(\nabla_\theta\widehat\Phi\) is exactly the pull-back of the original Riemannian gradient. Thus, the first-order critical points remain unchanged. The following statement formalizes this result.

\begin{informaltheorembox}

\textbf{Equivalence of stationery points.} Under the reparameterization $U \;=\; R(\theta)\,\overline{U}$, minimizing $\hat{\Phi}$ possesses exactly the same local minima and strict
saddle points as minimizing $\Phi$. For the formal result and proof, see Theorem \ref{thm:reparam-U} (Appendix).
\end{informaltheorembox}

Thus, we can effectively represent and optimize the projection subspace implicitly through rotations without compromising the quality or optimality of solutions.

\subsection{Reducing the Communication Cost}
\label{sec:comm-cost}

The reparameterization \( U(\theta) = R(\theta)\,\overline{U} \) allows us to locally cache the fixed orthonormal frame \(\overline{U}\) at each node, and transmit only the parameters \(\theta\). However, to fully parameterize rotations in the orthogonal group \(O(d)\), one typically requires \(\frac{1}{2}d(d-1)\) parameters, \textit{i.e.,} $\theta \in \mathbb{R}^{d(d-1)/2}$. We show next that in practice performing a dense search over all possible rotations is unnecessary. Specifically, we can obtain a  trade-off between the search space and the communication efficiency by controlling the dimensionality of $\theta$.

To achieve a more compact representation for the communication cost reduction, we select a small set of fixed skew-symmetric matrices \(\{A_1,\dots,A_k\}\subset\mathfrak{o}(d), A_i^T = -A_i\) (where $\mathfrak{o}(d)$ denotes the Lie algebra of the orthogonal group) and define the corresponding \(k\)-dimensional Lie subgroup \cite{Hall2015, EdelmanAriasSmith1998}:
\[
\mathcal{R}_k = \left\{\, R(\theta)=\exp\left(\textstyle\sum_{l=1}^k \theta(l) A_i\right) \mid \theta\in\mathbb{R}^{k}\,\right\},
\]

where $\theta(l)$ is the $l^{th}$ element of $\theta$. Because the exponential map is a local diffeomorphism around \(\theta=0\), the set \(U(\theta)=R(\theta)\,\overline{U}\) forms a \(k\)-dimensional submanifold of \(\mathrm{St}(d,r)\) for sufficiently small \(\|\theta\|\). Choosing \(k \ll \frac{1}{2}d(d-1)\) thus provides a favorable trade-off between communication cost and representational flexibility. Importantly, our earlier result on the absence of spurious minima remains valid provided an optimal frame \(U_{\star}\) lies within (or sufficiently close to) the reachable manifold \(\{R\overline{U} : R\in\mathcal{R}_k\}\), as the mapping \(\theta \mapsto U(\theta)\) remains locally surjective onto this manifold.

\subsection{Dynamic mixtures of subspaces via per-chunk rotations}
\label{sec:dyn-mix}

\S~\ref{sec:comm-cost} depicted that the rotation dimension \( k \) controls a trade-off between representational flexibility and communication efficiency. With a well-chosen a priori \(\overline{U}\), it becomes feasible to use a small \( k \), restricting the optimization to a local neighborhood around \(\overline{U}\). 

We generate this prior through a short, uncompressed \textbf{warm-up phase}, in which the model is trained for a small number of iterations (\(<500\)) using a reduced context length to avoid communication bottlenecks. After this phase, each node computes the top \( r \) principal components of its local attention weights and stores them as a fixed subspace basis \( \overline{U} \in \mathrm{St}(d, r) \). Empirical evidence from prior work on weight--subspace stabilization (e.g.,~\cite{feng2022rank,hu2024sp3}) suggests that dominant activation subspaces emerge early in training, supporting this strategy. 


\paragraph{Per-sample adaptation.}
Using a single global rotation for all inputs may underfit heterogeneous data. To retain expressivity \emph{without increasing \(k\)}, we introduce a lightweight mechanism to predict a unique rotation parameter \(\theta\) for each sequence chunk. Recall that, in context-parallel training, each node $i$ processes a distinct chunk $X_{i} \in \mathbb{R}^{n_i \times d}$ from the input sequence. For an attention output chunk $Z_{i} = WX_{i}$, we employ a small linear prediction head:
$
\psi: \mathbb{R}^{d}\rightarrow\mathbb{R}^{k}, \quad 
\theta = \psi\left( Z_{\text{avg},i}\right),
$ where $Z_{\text{avg},i}$ is the average attention output of the chunk, generating chunk-specific rotation parameters.
Given a set of preshared skew-symmetric generators \(\{A_l\}_{l=1}^{k}\subset\mathfrak{o}(d)\) cached locally on each node, we construct the rotation as:
$
R(\theta_i) = \exp\left(\sum_{l=1}^{k}\theta_i(l) A_l\right) \in \mathcal{R}_k \subset O(d)
$.
Locally, keys and values are compressed as
$
Z_{\text{comp},i} = Z_{i}\,R(\theta_i)\,\overline{U}\;\in\;\mathbb{R}^{n\times r}.
$
$(Z_{(\text{comp},i)}, \theta_i)$ is then broadcasted, and the receiving nodes reconstructs the keys/values as:
$
Z_i \approx Z_{(i, \text{comp})}\,\overline{U}^{\top} R(\theta_i)^{\top}.
$ Note that peak memory is dominated by the attention computation, scaling as $\mathcal{O}(n_i^2)$, making the linear head's overhead $\mathcal{O}(dk)$ negligible—an observation we also demonstrate empirically. The overall procedure is summarized in Algorithm \ref{alg:compressed_attention}.

\paragraph{Bandwidth cost.}  
In our method, each node transmits \(n r\) floats (activations) in \(\tilde Z\) and \(k\) additional scalars in \(\theta\). Typically, we have \(k \ll n r \ll n d\), ensuring low communication overhead. Remarkably, we found that even using \(k=1\) -- a single rotation angle that defines a plane -- is sufficient to preserve training stability and input-adaptive flexibility, achieving bandwidth efficiency comparable to that of a fixed global rotation.

In implementation, we set \(S\sim\mathcal N(0,1)^{d\times d}\) to be fixed and define the
skew-symmetric generator  
$
A\;:=\;\frac{\theta}{\|S-S^{\top}\|_{\mathrm F}}\,(S-S^{\top}),
\qquad
A^{\top}=-A,\quad \theta 
\geq 0\;.
$
For \(\theta\in\mathbb R\) we set the rotation
\(\displaystyle R(\theta)=\exp(\theta A)\in O(d)\),
so \( A\) fixes the rotation plane while \(\theta\) sets its
magnitude.

\paragraph{Second-order approximation.}  
Since \(A\) is skew-symmetric, its spectral norm satisfies \(\|A\|_2 = \theta\). For sufficiently small \(|\theta|\leq\epsilon \ll 1\), the rotation matrix \(R(\theta)\) admits a second-order Taylor approximation:
\[
R(\theta) \approx I + \theta A + \tfrac{1}{2}\theta^{2} A^{2}.
\tag{1}
\]
This approximation provides two key advantages. 1) \textbf{Computational cost: } scaling as \(\mathcal{O}(d^2)\), in contrast to the exact matrix exponential computation (e.g., by Padé or Schur decomposition), which scales as \(\mathcal{O}(d^3)\). 2) \textbf{Near identity bias:} it induces a beneficial near-identity bias, effectively acting as an approximately unbiased estimator of identity \(I\) when \(\theta\) is small and centered around zero (enforced via clipping). In this regime, higher-order terms vanish in expectation, yielding \(\mathbb{E}[R(\theta)] \approx I\). This property allows rotations to remain close to the initial warm-start subspace \(\overline{U}\), facilitating controlled local adaptation without significant drift. By fixing \(A\) and using a scalar \(\theta\), we achieve a communication complexity of \(\mathcal{O}(nr)\), significantly lower than the naive \(\mathcal{O}(nd)\).

Attention weights must still be synchronized across devices, but they
evolve far more slowly than activations \cite{ChenAdaptiveFreeze2023, BaHintonFastWeights2016}.
We therefore average the corresponding weights only every \(c\) steps;
in all experiments we use \(c = 200\), which incurs negligible
communication overhead.

\begin{algorithm}[t]
\label{algo}
\caption{Compression-aware context parallel attention (per node, per head)}
\label{alg:compressed_attention}
\begin{algorithmic}[1]
\Require Input $X \in \mathbb{R}^{n_i \times d}$, Attention weight $W \in \mathbb{R}^{d \times d}$,  Warm-started basis $\bar{U} \in \mathbb{R}^{d \times r}$, learnable linear head $\psi: \mathbb{R}^d \rightarrow \mathbb{R}^m$, sync interval $c$, current step $t$
\State Compute local keys and values: $Z \gets XW$
\State $Z_{\text{avg}} \gets \texttt{MeanToken}(Z)$
\State $\theta \gets \psi(Z_{\text{avg}})$ \Comment{Compute rotation param from local chunk}
\State $U \gets R(\theta)\, \bar{U}$ \Comment{Construct data-dependent subspace}
\State Compress: $Z_{\text{comp}} \gets Z U$ 
\State Broadcast $(Z_{\text{comp}}, \theta)$ to all other nodes
\State Receive $(Z_{(\text{comp},j)}, \theta_j)$ from all other nodes $j$
\ForAll{received $(Z_{(\text{comp},j)}, \theta_j)$}
    \State $U_j \gets R(\theta_j)\, \bar{U}$
    \State $Z_j \gets Z_{(\text{comp},j)}
    U_j^{\top}$ \Comment{Decompress}
\EndFor
\State Aggregate global $Z_g\in \{K_j,V_j \}, \forall j$ from all nodes
\State Compute blockwise attention: $A \gets \texttt{Softmax}(Q K^\top / \sqrt{d}) V$
\If{$t \bmod c = 0$}
    \State $W \gets \texttt{AllReduceAvg}(W)$ \Comment{Sparse sync of attention weights}
\EndIf
\end{algorithmic}
\end{algorithm}

\subsection{Unplugging the Projection Components}
\label{sec:unplug-projections}

Our method augments the transformer architecture with two non-standard components: (i) a small linear \emph{rotation head} predicting the rotation parameters \(\theta\), and (ii) low-rank \emph{projection matrices} \(U\) used for compressing activations. Although these components pose minimal overhead during training, strict API compatibility with off-the-shelf transformer models might be necessary for certain downstream applications.

As training proceeds, the learnable weights associated with our
auxiliary projection heads collapse onto the data–dependent subspaces
they steer.  Once the model is close to convergence we can therefore
\emph{drop these heads entirely}, reverting to a vanilla Transformer
without losing the predictive gains accumulated during training.
The following result formalizes the collapse mechanism.

\begin{informaltheorembox}

\textbf{Bound on “idle’’ attention directions with data dependent projectors.}\;
Let the  sample projector be
\(P(x)=U(x)U(x)^{\!\top}\).
Pick any other projector \(Q\) that projects onto an arbitrary subspace. 
Define the average overlap $p_Q\;:=\;\mathbb{E}_{x}\bigl[\|P(x)Q\|_{2}\bigr]\in[0,1]$. Run stochastic gradient descent with weight
decay~\(\lambda>0\).
Then, 
the attention weights that lie inside the \(Q\)-subspace obey
$
   \lim_{t \to \infty}   \bigl\|W^{(t)}Q\bigr\|_{F}
      \;\;\le\;\;
      \frac{p_Q\,L}{\lambda}
$
for an $L$ Lipshchitz bounded loss.
Hence, if the data almost never excites those directions (\(p_Q\!\ll\!1\)), 
the corresponding weights shrink away. That is, idle directions are pruned for
free. For the formal theorem and proof, see Theorem \ref{thm:subspace-decay-sgd} (Appendix).
\end{informaltheorembox}

Once the weights have collapsed onto their data–aligned subspaces, both the
rotation head and its basis matrix \(U\) are redundant.  We can therefore
\emph{detach} these components and perform a brief, low-learning-rate
fine-tuning pass to polish the remaining parameters.  
As Fig.~\ref{fig:unplug} shows, the loss curve remains smooth across this
transition, indicating that no \emph{optimisation shock} is introduced.

At inference time the model is now \emph{indistinguishable} from a standard
transformer: it adds \textbf{zero} extra parameters, requires no custom
kernels, and is fully compatible with existing deployment pipelines.

\section{Related Work}\label{sec:related}

\paragraph{Decentralized training.} 
Decentralized learning dispenses with a central coordinator, instead relying on a collective of \emph{autonomous} devices that cooperate over mesh-style networks to train large-scale models. These devices are typically heterogeneous and geographically dispersed, confronting links of nonuniform latency and bandwidth.  
The foundational theory on convergence and robustness has been established by~\cite{lian2017can,koloskova2019decentralized,koloskova2020unified}, while complementary systems work has demonstrated practical viability on real clusters~\cite{ryabinin2020towards,diskin2021distributed}. 
Most prior art, however, is confined to DDP settings~\cite{lian2017can,koloskova2019decentralized,koloskova2020unified,diskin2021distributed}, limiting model size to the aggregate memory of an individual node. Note that this is a comparatively well studied domain, and is orthogonal to the unexplored decentralized context parallel setting that we explore. A notable work in DDP domain is Power Gossip \cite{vogels2020powergossip}, which replaces synchronous all-to-all communication with gossip-style information exchange among neighboring replicas arranged in a mesh. Its key insight is that, when each replica trains independently via local SGD, the pairwise weight differences evolve in a low‑rank sub‑space, enabling them to efficiently compress the weight differences during gossip.  Another interesting DDP method is Photon \cite{sani2024photon},  where its communication savings stem primarily from infrequent gradient exchanges rather than from any explicit compression scheme. Such skip‑sync approaches are infeasible in context‑parallel pipelines, where activations must be transferred between nodes at every forward and backward pass. Nevertheless, these DDP‑style techniques are orthogonal to our method and could be combined with it in hybrid setups.

Scheduling-oriented approaches such as SWARM parallelism~\cite{ryabinin2023swarm} and Tasklets~\cite{yuan2022decentralized} alleviate straggler effects and network stochasticity, yet they still inherit the communication overhead intrinsic to the decentralized setting.  
In contrast, we introduce the first communication-compression strategy tailored to CP, removing a critical bandwidth bottleneck that has thus far hindered scaling decentralized models across larger context windows.

\paragraph{Context‑parallel attention.}
For single‑device long‑sequence processing, sparse approximations such as BigBird~\cite{zaheer2020big} cut attention complexity to $O(n)$, while IO‑aware exact kernels like FlashAttention~\cite{pagliardini2023faster} maximize hardware throughput with tiling and on‑chip caching. 
Recent systems research parallelizes the \emph{sequence dimension} itself \cite{liang2024torchtitan, rasley2020deepspeed, grattafiori2024llama}: Blockwise Parallel Transformers overlap compute and ring‑all‑reduce to achieve near‑linear speedups on sequences of $32\mathrm{K}$ tokens~\cite{liu2023blockwise}, and RingAttention extends the idea to virtually unlimited contexts via pipelined block exchanges~\cite{liu2024ringattention}.  
These methods, however, still broadcast full key/value tensors.  
Our approach instead transmits a compact low‑rank representation plus a lightweight rotation, reducing bandwidth while preserving exact attention semantics and thus complementing existing context‑parallel frameworks.






\section{Experiments}
\label{sec:experiments}

\begin{figure*}[ht]
    \centering
    \begin{subfigure}[b]{0.3\textwidth}
        \includegraphics[width=\textwidth]{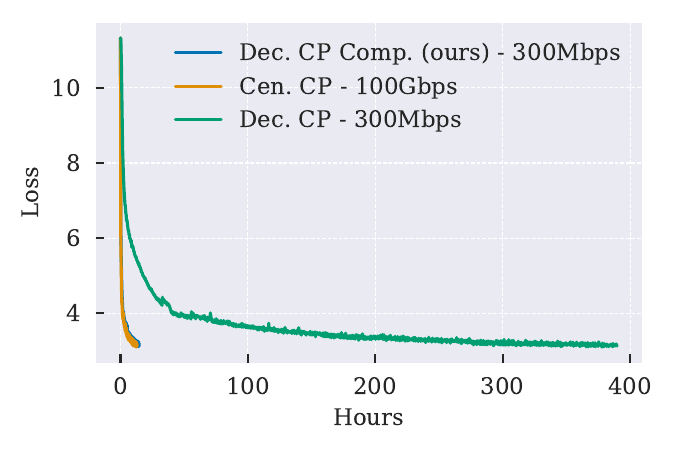}
 
        \label{fig:sub1}
    \end{subfigure}
    \hfill
    \begin{subfigure}[b]{0.3\textwidth}
        \includegraphics[width=\textwidth]{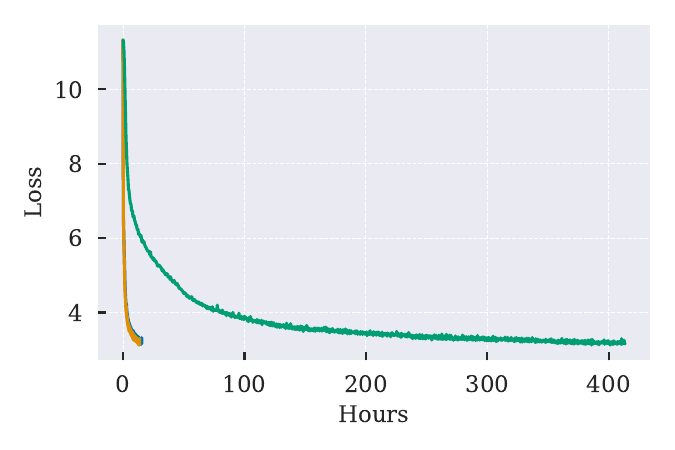}

        \label{fig:sub2}
    \end{subfigure}
    \hfill
    \begin{subfigure}[b]{0.3\textwidth}
        \includegraphics[width=\textwidth]{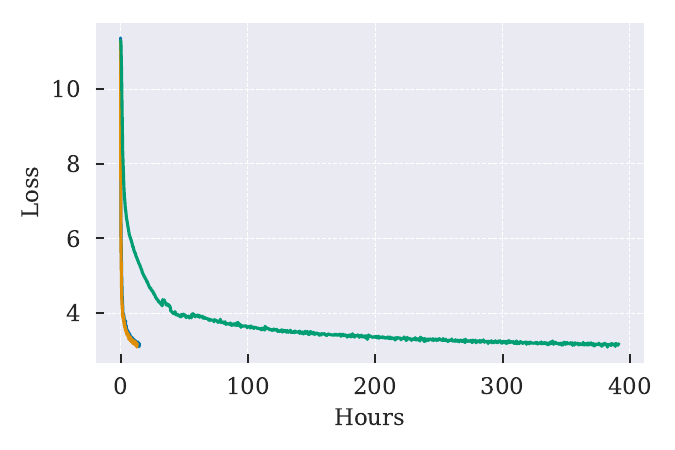}

        \label{fig:sub3}
    \end{subfigure}
  
    \caption{\textbf{Convergence in low-bandwidth settings.} From left to right: Fineweb, C4, and BookCorpus. The training curves are presented against wall-clock time for an $8$-layer ($800\mathrm{M}$) model trained with a $132\mathrm{K}$ context window parallelized across $8$ GPUs. Decentralized models utilize $300\mathrm{Mbps}$ connections while the centralized model has datacenter-grade $100\mathrm{Gbps}$ links. Our compressed model achieves on-par convergence to the centralized model, even under a $300\mathrm{Mbps}$ bandwidth budget. In contrast, the non-compressed decentralized model with $300\mathrm{Mbps}$ links suffers from significantly slower convergence.  }
    \label{fig:validation}
\end{figure*}

\subsection{Experimental Setup} 

We evaluate decoder-only models on three large-scale corpora -- FineWeb (FW)~\cite{wikitext},
C4~\cite{2019t5}, and
BookCorpus (BC)~\cite{bookcorpus}.
For each dataset, we reserve \(10\,\%\) of the training split for validation.
All model backbones follow \textsc{Llama 3}~\cite{dubey2024llama}; exact model specifications are given in the corresponding sections.
We use a \texttt{base-learning-rate} = \(3\times10^{-4}\) with linear warm-up and decay, and apply a \texttt{weight-decay} =  $0.01$.
Every transformer layer is compressed except for the final block, where \(K\) and \(V\) projections are compressed by \(98\,\%\) and \(95\,\%\) (overall $96.5\%$), respectively by choosing $r$ w.r.t. $d$ appropriately. We use the GPT2 tokenizer for all models.  

\begin{table}

\setlength{\tabcolsep}{3pt}  
\renewcommand{\arraystretch}{0.95}  
\centering
\caption{\textbf{Design ablations} (val. perplexity $\downarrow$). All models are trained for $10\mathrm{K}$ steps with a $132\mathrm{K}$ context. Second-order approximations preserve performance, while overcompressing $V$ degrades it.}
\label{tab:ablations}
\begin{tabular}{@{}lccc@{}}
\toprule
\textsc{\textbf{Setting}} & \textsc{\textbf{FW}} & \textsc{\textbf{C4}} & \textsc{\textbf{BC}} \\
\midrule
\textbf{Ours }                 & \textbf{22.64} & \textbf{23.33} & \textbf{25.27} \\
Ours + Fixed \(\bar{U}\)         & 26.57 & 27.11 & 30.33 \\
Ours + Rand. rot. \(R(\theta)\)      & 24.93 & 25.17 & 29.58 \\
Ours - 2nd-order approx.             & \textbf{22.64} & \textbf{23.33} & \textbf{25.27} \\
Ours - No warm start                 & 26.63 & 26.91 & 30.15 \\
Ours ($K,V \to 98\%$)         & 24.74 & 24.99 & 29.46 \\
Ours ($K \to 99\%, V \to 95\%$)      & 24.68 & 24.91 & 29.22 \\
\bottomrule
\end{tabular}

\end{table}
\subsection{Bandwidth efficiency in decentralized settings}

\begin{figure}[t]                       
  \centering
  %
  \begin{minipage}[t]{0.48\linewidth}
    \centering
    \includegraphics[width=\linewidth]{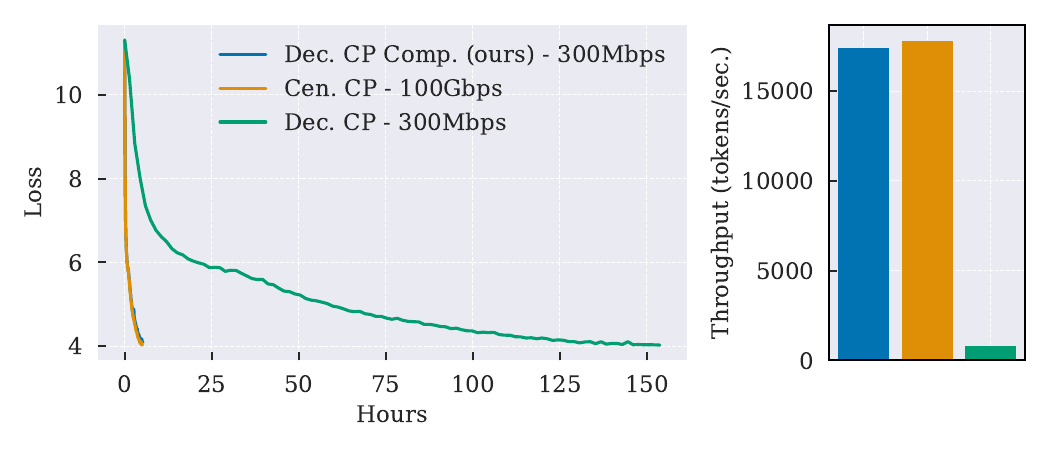}  
    \captionof{figure}{\textbf{Scaling across parallelism strategies.} Our compression based CP scheme can be seamlessly fused with other parallel training strategies. We train a $3\mathrm{B}$-parameter model ($32$ layers) with both pipeline parallel and CP enabled across $32$ $\mathrm{A}100$s. Our compressed approach yields substantial throughput gains over uncompressed decentralized CP and nearly matches the performance of centralized CP.
}
    \label{fig:scale}
  \end{minipage}
  \hfill                                   
  %
  \begin{minipage}[t]{0.48\linewidth}
    \centering
    \includegraphics[width=\linewidth]{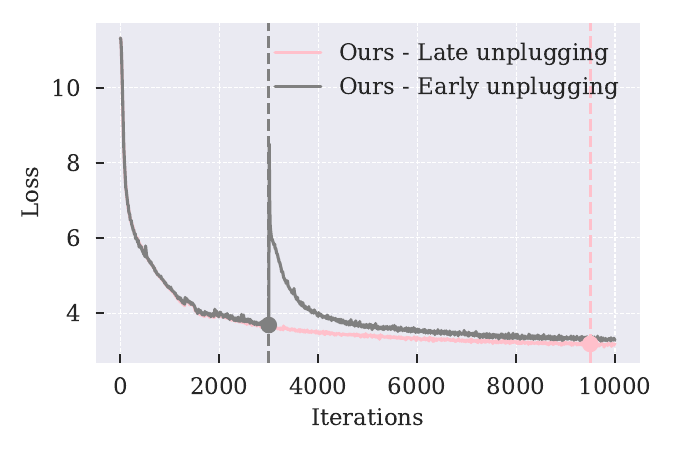}  
    \captionof{figure}{\textbf{Unplugging projection components.} After sufficient training, the rotation head and projection layers can be removed—reverting the network to a vanilla transformer—without impairing convergence. Training curves with dashed lines marking projection removal points. Late removal preserves convergence (see \S.~\ref{sec:unplug-projections}), while early removal causes a temporary disruption followed by surprisingly rapid recovery.}
    \label{fig:unplug}
  \end{minipage}
\end{figure}

We train an \(8\)-layer, \(800\mathrm{M}\)-parameter model (\texttt{embedding-size} = \(2048\), \texttt{attention-heads} = \(8\) ) under two network settings: a centralized \(100\mathrm{Gbps}\) fabric and  decentralized \(300\mathrm{Mbps}\) internet-grade links. Using CP, we process a sequence length of \(132\mathrm{K}\) tokens across eight $\mathrm{A100}$ GPUs connected at the respective bandwidths.
Fig.~\ref{fig:validation} shows that vanilla CP over a \(300\mathrm{Mbps}\) link is more than \( 20\times\) slower compared to a centralized \(100\mathrm{Gbps}\) mesh.  
With our compression, the same \(300\mathrm{Mbps}\) setup converges almost as fast as the centralized baseline.

\paragraph{Validation.} Table~\ref{tab:validation} reports test-time performance of the trained models. To this end,
we train each model up to its compute-optimal point, following the Chinchilla scaling law~\cite{hoffmanntraining}.
Specifically, for our \(800\mathrm{M}\)-parameter models, we use a $1:20$ model-to-token ratio and train for \(16\mathrm{B}\) tokens on each dataset. Remarkably, our compressed decentralized model matches, and even slightly outperforms, the perplexity of the centralized model at the same number of training iterations, while delivering significantly higher throughput than vanilla (uncompressed) CP over commodity links. Training the uncompressed model to completion over low-bandwidth links is computationally infeasible (estimated at over $150$ days), so we report only its throughput (TPS) in this setting.



\begin{table}[t]
\centering

\setlength{\tabcolsep}{4pt}
\renewcommand{\arraystretch}{0.95}
\caption{\textbf{Validation perplexity ($\downarrow$) and throughput (TPS).}
All models are trained with a $132\mathrm{K}$ context window to the compute-optimal point~\cite{hoffmanntraining} ($16\mathrm{B}$ training tokens). Our method yields a $20\times$ TPS boost while slightly outperforming centralized CP in perplexity, with minimal memory overhead.}
\label{tab:validation}
\begin{tabular}{@{}lccccr@{}}
\toprule
\textsc{\textbf{Model}} & \textsc{\textbf{FW}} & \textsc{\textbf{C4}} & \textsc{\textbf{BC}} & \textsc{\textbf{TPS}} & \textsc{\textbf{Mem (GB)/GPU}} \\
\midrule
Cen. CP - 100Gbps     & 17.18 & 17.51 & 17.88 & 56K & 38.4 \\
Dec. CP - 300Mbps\textsuperscript{\dag} & -- & -- & -- & 2.7K & 38.4 \\
\textbf{Dec. CP Comp. - 300Mbps (ours)} & \textbf{17.06} & \textbf{17.47} & \textbf{17.81} & \textbf{55K} ($\times 20$) & 38.7 (+0.7\%) \\
\bottomrule
\end{tabular}

\raggedright
\textsuperscript{\dag} Training uncompressed models to convergence at $300\mathrm{Mbps}$ is infeasible ($>$150 days); only throughput is reported.
\end{table}



\subsection{Ablations}

We perform ablations on $800\mathrm{M}$-parameter models with a $132\mathrm{K}$ context across eight $\mathrm{A}100$ GPUs (see Table~\ref{tab:ablations}). Models using learned rotations outperform those with fixed or random projections. The second-order exponential approximation does not impact performance, confirming its adequacy. Omitting the warm-start initialization of principal directions (\(\bar{U}\)) noticeably degrades results, highlighting the importance of this prior. 
\begin{table}

\setlength{\tabcolsep}{3pt}  
\renewcommand{\arraystretch}{0.95}  
\centering
\caption{\textbf{Effect of warmup steps} (val. perplexity $\downarrow$). All models are trained for $10\mathrm{K}$ steps with a $132\mathrm{K}$ context. The method is not highly sensitive to the number of warmup steps.}
\label{tab:warmup}
\begin{tabular}{@{}lc@{}}
\toprule
\textsc{\textbf{Warmup-steps}} & \textsc{\textbf{Perplexity}}\\
\midrule
0 & 26.63 \\
100 &	24.44 \\
300	& 22.66 \\
500	& 22.64 \\
1000 & 22.87 \\
2000 & 22.64 \\
5000 & 22.71 \\
\bottomrule
\end{tabular}

\end{table}

\textbf{Scaling:} Our compression based CP scales well and can be seamlessly fusing with other parallel training strategies. We scale the model to $32$ layers ($3\mathrm{B}$ parameters) with both pipeline parallel and CP enabled over $32$ $\mathrm{A}100$s (Fig.~\ref{fig:scale}) and achieve a significant throughput gain.

\textbf{Reparameterization:} A key step of our method is reparameterizing \(U\) which bypasses expensive Riemannian operations (QR/SVD pullbacks). As shown in Table~\ref{tab:ablations_tps}, this reparameterization significantly improves throughput (TPS). More ablations against   architecture choices are provided in Appendix~\ref{app:ablations}.

\textbf{Warmup steps:} To measure the effect warmup steps of we conducted an ablation study varying the warm-up duration and evaluated the resulting perplexity on the FineWeb dataset. The results are shown in Table \ref{tab:warmup}. As demonstrated, even with a reduced warm-up of 300 steps, the model achieves comparable performance, indicating no significant degradation. In practice, we default to 500 steps to provide a safe and stable baseline. This study further emphasizes the lightweight and robust nature of our warm-up strategy, especially in contrast to the more elaborate scheduling mechanisms commonly employed in modern LLM pre-training. Note that the perplexity differences are minor and stable, indicating performance is stable after 300 warm-up steps.

\subsection{Unplugging Projections and Rotation Heads}

As discussed in \S.~\ref{sec:unplug-projections}, practitioners may prefer reverting to a standard transformer after pretraining for compatibility with downstream frameworks. We empirically validate our theoretical prediction that attention weights progressively align with the projection subspace, allowing safe removal of projection layers and rotation heads near the end of training. Fig.~\ref{fig:unplug} shows that removing these components late preserves convergence, while doing so prematurely disrupts training.





\subsection{Comparison Against Baselines}

As no prior baselines exist for CP compression, we construct two:  
(i)~\textbf{Sparsification}—a Top-10\% scheme (90\% compression), transmitting only the largest-magnitude entries of the \(K,V\) chunks, inspired by common DDP compression methods;  
(ii)~\textbf{Quantization}—a 4-bit quantization (75\% compression) of the \(K,V\) activations prior to transmission, following standard practices in activation compression. As shown in Fig.~\ref{fig:baselines} (left), we outperform these baselines comprehensively ($132$k context window) even when using a more aggressive compression rate of $96.5\%$.

\begin{figure}  
  \centering
  \begin{subfigure}{0.49\linewidth}
    \includegraphics[width=\linewidth]{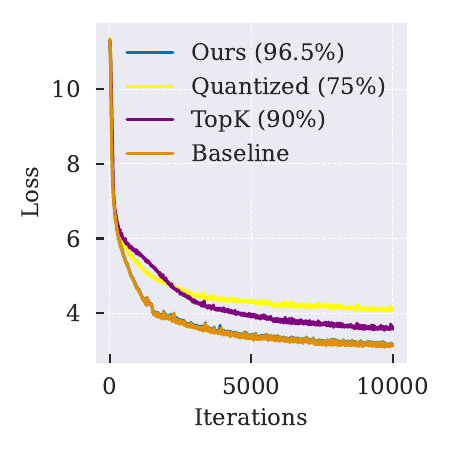}
    \label{fig:wrap-left}
  \end{subfigure}\hfill
  \begin{subfigure}{0.49\linewidth}
    \includegraphics[width=\linewidth]{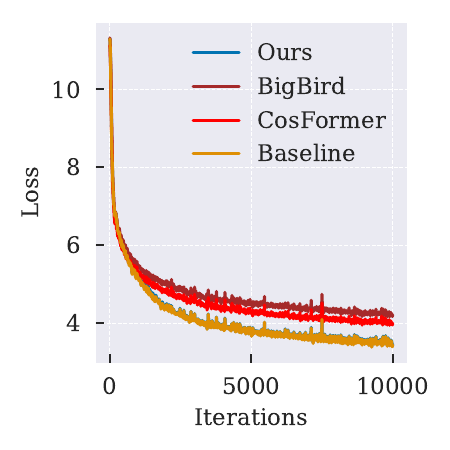}
    \label{fig:wrap-right}
  \end{subfigure}
 \
  \caption{\textbf{Baseline comparisons.}
\textit{Left:} Because no method currently compresses context-parallel training, we build two baselines; Top-$k$ sparsification and quantization.
\textit{Right:} We also compare with long-context models BigBird and CosFormer. Both are limited to $32\mathrm{K}$ tokens on $\mathrm{A}100$ GPUs, so all models are evaluated at that length. In both panels, our compressed CP curve is nearly indistinguishable from the uncompressed reference, whereas every baseline falls well short.}
  \label{fig:baselines}
 
\end{figure}

For completeness, we also compare against long-context models BigBird~\cite{zaheer2020big} and CosFormer~\cite{qin2022cosformer}, which are not designed for CP and can handle at most $32\mathrm{K}$ tokens on an $\mathrm{A}100$. For a fair comparison, we apply our compression to CP across four GPUs, each processing $8\mathrm{K}$ tokens. As shown in Fig.~\ref{fig:baselines} (right), both baselines exhibit significantly worse convergence than our method. All experiments are performed on $800\mathrm{M}$ parameter models.

\section{Conclusion}

We propose the first compression method that enables context-parallel training of language models in decentralized environments with low-bandwidth interconnects. Our approach supports training with context lengths over $100\mathrm{K}$ tokens on isolated GPUs connected via internet-grade links (e.g., $300\mathrm{Mbps}$), while matching the wall-clock convergence of centralized systems with high-speed ($100\mathrm{Gbps}$) connections. Additionally, our method preserves compatibility with standard transformer architectures by allowing the projection layers to be removed after training, facilitating seamless deployment in downstream frameworks. We provide a theoretical analysis of the key properties of our method and validate its effectiveness through an extensive empirical evaluation.

\begin{table}
\centering
\setlength{\tabcolsep}{4pt}
\renewcommand{\arraystretch}{0.9}
\caption{\textbf{TPS gain from design choices.} Reparameterization and second-order approximation yield significant throughput improvements.}
\label{tab:ablations_tps}
\begin{tabular}{@{}l c@{}}
\toprule
\textsc{\textbf{Setting}} & \textsc{\textbf{TPS}} ($\uparrow$) \\
\midrule
\textbf{Ours }               & \textbf{55K} \\
w/o reparam.        & 37K \\
w/o $2^{nd}$ ord. approx.         & 30K \\
\bottomrule
\end{tabular}
\end{table}

\section{Limitations}
\label{sec:limitations}

Our compression method delivers near-lossless convergence in context-parallel training, but several open questions remain.  
First, alternative reparameterisations beyond simple subspace rotations may unlock further accuracy or efficiency gains.  
Second, the method’s surprising ability to locate good minima even as the search space is heavily reduced (via very low-dimensional \(\theta\)) lacks a rigorous explanation; its ties to recent work on implicit regularisation and lottery-ticket-style phenomena deserve closer study.  Despite these gaps, this work establishes the first baseline for context-parallel compression and we hope it spurs deeper theoretical and empirical exploration.









\bibliographystyle{plainnat}
\nobibliography*
\bibliography{main}

\newpage

\appendix

\begin{center}
    {\LARGE \textbf{Appendix: Mixtures of Subspaces for Bandwidth Efficient Context Parallel Training}}
\end{center}

\section{Theoretical analysis}
To provide formal support for our proposed joint optimization of the factorization \( W = B U U^\top \) discussed in the main paper, we now present a rigorous  analysis. Specifically, we consider the optimization dynamics on the product manifold \( \mathcal{M} = \mathbb{R}^{d\times r} \times \mathrm{St}(d,r) \), where \(B\) is optimized in standard Euclidean space and \(U\) resides on the Stiefel manifold. Under standard smoothness and boundedness assumptions commonly adopted in optimization theory, the following theorem establishes that our gradient descent updates converge linearly to a first-order stationary point.

\label{app:theory}
\begin{lemma}
\label{thm:ProdManifoldConvergenceFull}
Let $F:\mathbb{R}^{n\times n}\!\times\!\mathbb{R}^{p}\!\to\!\mathbb{R}$ be $L$-smooth and bounded below on rank-$r$ matrices.  Write each attention weight as $W=BUU^\top$ with $B\in\mathbb{R}^{n\times n}$ and $U\in\mathrm{St}(n,r):=\{X\in\mathbb{R}^{n\times r}\mid X^\top X=I_r\}$, and collect the remaining parameters in $\vartheta\in\mathbb{R}^{p}$.  Define $\Phi(B,U,\vartheta):=F(BUU^\top,\vartheta)$.  Suppose $\Phi$ satisfies a PL inequality with constant $\mu>0$ in a neighbourhood of an optimum $(B_\star,U_\star,\vartheta_\star)$ and that $(B_0,U_0,\vartheta_0)$ lies in this neighbourhood.  Using the updates $B_{k+1}=B_k-\alpha_B\nabla_B\Phi_k$, $U_{k+1}=\mathrm{exp}_{U_k}(-\alpha_U\mathrm{grad}_U\Phi_k)$, and $\vartheta_{k+1}=\vartheta_k-\alpha_\theta\nabla_\theta\Phi_k$ with fixed stepsizes $0<\alpha_B,\alpha_U,\alpha_\theta\le 1/L$ and $\alpha:=\min\{\alpha_B,\alpha_U,\alpha_\theta\}$, we have for all $k\ge0$
\[
\Phi(B_k,U_k,\vartheta_k)-\Phi_\star\le(1-\alpha\mu)^k\bigl[\Phi(B_0,U_0,\vartheta_0)-\Phi_\star\bigr],\qquad\Phi_\star:=\Phi(B_\star,U_\star,\vartheta_\star),
\]
and the iterates converge linearly to $(B_\infty,U_\infty,\vartheta_\infty)$ with $B_\infty U_\infty U_\infty^\top=W_\star$ and $\vartheta_\infty=\vartheta_\star$.
\end{lemma}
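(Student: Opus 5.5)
The plan is the standard descent-lemma plus PL argument, carried out on the product manifold $\mathcal{M}=\mathbb{R}^{n\times n}\times\mathrm{St}(n,r)\times\mathbb{R}^p$. We view the three block updates as one Riemannian gradient step with a block-diagonal stepsize. The product metric makes the Riemannian gradient of $\Phi$ equal to $(\nabla_B\Phi,\ \mathrm{grad}_U\Phi,\ \nabla_\vartheta\Phi)$, where $\mathrm{grad}_U\Phi=P_{T_U}(\nabla_U\Phi)$ is the tangent-space projection of the Euclidean gradient. Its squared norm is the sum of the three block norms, and the PL inequality is read in exactly this norm: $\tfrac12\|\mathrm{grad}\,\Phi\|^2\ge\mu(\Phi-\Phi_\star)$.

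\textbf{Step 1 (smoothness of the pullback).} The first task is to show that $\Phi$ is $L$-smooth on $\mathcal{M}$ in the Riemannian sense, meaning
\[
\Phi(\mathrm{exp}_x(\xi))\le\Phi(x)+\langle\mathrm{grad}\,\Phi(x),\xi\rangle+\tfrac{L}{2}\|\xi\|^2 .
\]
For the Euclidean blocks this follows from the chain rule applied to $(B,U,\vartheta)\mapsto(BUU^\top,\vartheta)$. The $U$-factor is controlled by $\|U\|_2=1$, as remarked in the main text. The $B$-factor needs a bound on $\|B\|$ along the trajectory; we will use boundedness of the PL neighbourhood, which is implied by the iterates staying in it (Step 3).

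For the Stiefel block we use that geodesics $t\mapsto\mathrm{exp}_U(t\xi)$ have bounded second derivative, since the manifold is compact with bounded curvature. Together with the Euclidean $L$-smoothness of $F$, this gives a second-order Taylor bound along the curve. Strictly, this yields some constant $L'$, possibly larger than $L$ and depending on the neighbourhood. We then interpret the stated $L$ as this effective constant. This is the step I expect to be the main obstacle, because the composite constant genuinely depends on $\|B\|$ and on the curvature term. I would handle it by absorbing everything into the assumption, i.e.\ by assuming $\Phi$ is geodesically $L$-smooth on the neighbourhood.

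\textbf{Step 2 (sufficient decrease).} With stepsizes $\alpha_i\le 1/L$, applying the descent lemma blockwise gives
\[
\Phi_{k+1}\le\Phi_k-\sum_i\alpha_i\bigl(1-\tfrac{L\alpha_i}{2}\bigr)\|g_i\|^2\le\Phi_k-\tfrac{\alpha}{2}\|\mathrm{grad}\,\Phi_k\|^2 ,
\]
where $g_i$ denotes the $i$-th block of the gradient. Combining this with PL yields
\[
\Phi_{k+1}-\Phi_\star\le(1-\alpha\mu)(\Phi_k-\Phi_\star),
\]
and induction gives the stated bound.

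\textbf{Step 3 (iterates stay local and converge).} The induction in Step 2 is only valid while the iterates remain in the PL neighbourhood, so we must check this. The step length satisfies
\[
d(x_k,x_{k+1})\le\alpha_{\max}\|\mathrm{grad}\,\Phi_k\|\le\sqrt{2\alpha_{\max}(\Phi_k-\Phi_{k+1})\cdot\alpha_{\max}/\alpha},
\]
and $\Phi_k-\Phi_{k+1}\le\Phi_k-\Phi_\star$ decays geometrically. Summing the resulting geometric series shows that the total path length is $O\bigl(\sqrt{\Phi_0-\Phi_\star}\bigr)$. Hence, if $x_0$ is close enough to the optimum (as the hypothesis of the lemma provides, after shrinking the neighbourhood if necessary), all iterates stay inside it. The same summability makes $(x_k)$ Cauchy, so it converges linearly to some $x_\infty=(B_\infty,U_\infty,\vartheta_\infty)$.

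By continuity, $\Phi(x_\infty)=\Phi_\star$. It remains to identify the limit with $W_\star$ and $\vartheta_\star$. Because the parameterization is non-unique, we cannot claim that $(B_\infty,U_\infty)$ equals $(B_\star,U_\star)$, only that it reaches the same value. We therefore read the identification $B_\infty U_\infty U_\infty^\top=W_\star$, $\vartheta_\infty=\vartheta_\star$ as requiring that minimizers in the neighbourhood be unique at the level of $(W,\vartheta)$. If needed, I would state this explicitly as part of the ``mild assumptions'' hypothesis: the minimizer $(W_\star,\vartheta_\star)$ of $F$ is locally unique.
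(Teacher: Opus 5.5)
Your proof is correct once the hypotheses you add explicitly are granted. Its core, sufficient decrease plus PL, is the same as the paper's, but the route differs in ways worth noting.

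\textbf{The update step and the smoothness constant.} The paper does not take a single step on the product manifold. It chains three block descent lemmas through the intermediate points $(B_{k+1},U_k,\vartheta_k)$ and $(B_{k+1},U_{k+1},\vartheta_k)$ and sums them. That is the chain one writes for a Gauss--Seidel scheme, yet every gradient is written at $(B_k,U_k,\vartheta_k)$, so it matches neither that scheme nor the simultaneous updates in the statement exactly. You instead treat the update as one Riemannian gradient step with block-diagonal stepsizes. This matches the stated rule exactly and gives the decrease $\tfrac{\alpha}{2}\|\mathrm{grad}\,\Phi(x_k)\|^2$ at the very point where PL is applied. The price is a joint geodesic smoothness assumption on $\mathcal M$ rather than per-block smoothness. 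The paper gets its constant by asserting that every block gradient is $L$-Lipschitz because $\|U\|_2=1$ and the tangent projection is bounded. That overlooks the dependence on $\|B\|$ and on Stiefel curvature that you flag in Step 1, so your ``effective $L$ on the neighbourhood'' is the honest form of the same assumption.

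\textbf{Localization and the limit.} The paper never shows that the iterates stay in the PL region; it writes ``containing the iterates'' into the PL hypothesis. It infers convergence of the iterates only from vanishing gradients and a remark about limit points. It then defines $W_\star:=B_\infty U_\infty U_\infty^\top$, so that identification is tautological, and asserts $\vartheta_\infty=\vartheta_\star$ without argument. Your finite-path-length argument actually proves localization and $d(x_k,x_\infty)=O\bigl((1-\alpha\mu)^{k/2}\bigr)$. You are also right that matching the limit to $(W_\star,\vartheta_\star)$ needs a local-uniqueness assumption, since PL does not force isolated minimizers.

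\textbf{One wording fix.} Shrinking the neighbourhood does not bring $x_0$ closer to the optimum. What your argument needs is that the ball of radius $C\sqrt{\Phi_0-\Phi_\star}$ about $x_0$ lies where PL and smoothness hold. That is a genuine, though necessary, strengthening of ``$x_0$ lies in this neighbourhood''. Your circular use of a bound on $\|B\|$ between Steps 1 and 3 is resolved by running localization and the descent step in one joint induction.
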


\begin{proof}
Because $F$ is $L$-smooth and $B\!\mapsto\!BUU^\top$ is linear,  
\[
\nabla_{B}\Phi(B,U,\vartheta)=\bigl[\nabla_WF(W,\vartheta)\bigr]_{W=BUU^\top}UU^\top.
\]
And we also know $\|U\|$ is bounded so $\nabla_{B}\Phi(B,U,\vartheta)$ is $L$-Lipschitz in $(B,U,\vartheta)$; the same holds for $\nabla_\vartheta\Phi$.  
For $U\in\mathrm{St}(n,r)$ the Riemannian gradient is  
\[
\mathrm{grad}_U\Phi=\pi_{T_U\mathrm{St}(n,r)}\bigl(\nabla_U\Phi\bigr),
\]
Where $T_U$ is the tangent space and $\pi$ is the projection back on to $\mathrm{St}(n,r)$.  See that the projection $\pi_{T_U\mathrm{St}(n,r)}$ is bounded, so $\mathrm{grad}_U\Phi$ is also $L$-Lipschitz.

\smallskip
Now we consider gradient descent in each parameter block.
By the Euclidean descent lemma \cite{NocedalWright2006},
\[
\Phi(B_{k+1},U_k,\theta_k)\le\Phi_k-\tfrac{\alpha_B}{2}\|\nabla_B\Phi_k\|^2.\tag{1}
\]
With $B_{k+1},\theta_k$ fixed, the Riemannian descent lemma on $\mathrm{St}(n,r)$ \cite{AbsilMahonySepulchre2008} gives
\[
\Phi(B_{k+1},U_{k+1},\vartheta_k)\le\Phi(B_{k+1},U_k,\vartheta_k)-\tfrac{\alpha_U}{2}\|\nabla_U\Phi_k\|^2.\tag{2}
\]
Finally, keeping $(B_{k+1},U_{k+1})$ fixed,
\[
\Phi_{k+1}\le\Phi(B_{k+1},U_{k+1},\vartheta_k)-\tfrac{\alpha_\theta}{2}\|\nabla_\vartheta\Phi_k\|^2.\tag{3}
\]

\smallskip

Summing (1)–(3) yields
\[
\Phi_{k+1}\le\Phi_k-\tfrac12\!\bigl[\alpha_B\|\nabla_B\Phi_k\|^2+\alpha_U\|\nabla_U\Phi_k\|^2+\alpha_\theta\|\nabla_\theta\Phi_k\|^2\bigr].
\tag{4}
\]
Let $\alpha:=\min\{\alpha_B,\alpha_U,\alpha_\theta\}$.

\smallskip

By PL assumption, there exist $\mu>0$ and a neighbourhood $\mathcal{N}\subseteq\mathbb{R}^{n\times n}\!\times\!\mathrm{St}(n,r)\!\times\!\mathbb{R}^{p}$ containing the iterates such that
\[
\|\nabla_B\Phi\|^2+\|\nabla_U\Phi\|^2+\|\nabla_\theta\Phi\|^2\ge2\mu\bigl[\Phi-\Phi_\star\bigr]\;\;\forall(B,U,\vartheta)\in\mathcal N.\tag{5}
\]

\smallskip

Combining (4) and (5) gives
\[
\Phi_{k+1}-\Phi_\star\le(1-\alpha\mu)\bigl[\Phi_k-\Phi_\star\bigr],
\]
and induction yields

\[
  \Phi_k-\Phi_\star\le(1-\alpha\mu)^k\bigl[\Phi_0-\Phi_\star\bigr].
\]

which is the claimed linear rate.

\smallskip

The geometric decay plus $L$-smoothness implies
$\sum_k\|\nabla_B\Phi_k\|^2<\infty$ (similarly for $\nabla_U\Phi_k$ and
$\nabla_\theta\Phi_k$), hence all gradient blocks vanish.  Any limit
point $(B_\infty,U_\infty,\theta_\infty)$ therefore satisfies the
first-order conditions and attains $\Phi_\star$.  Consequently
$W_k:=B_kU_kU_k^\top$ converges to
$W_\star:=B_\infty U_\infty U_\infty^\top$, and
$\theta_k\to\theta_\infty=\theta_\star$.

\end{proof}

\subsection{Geometric Impact of Rotational Reparameterisation}
\label{sec:geometry_reparam}

Next, to formally verify that our reparameterization \(U(\theta)=R(\theta)\,\overline{U}\) preserves the  stationary points of the optimization landscape, we present the complete theorem and proof in this appendix. Specifically, we rigorously demonstrate that optimizing in the unconstrained parameter space \(\theta\) via ordinary SGD or Adam does not alter the nature or locations of local minima and strict saddle points of the original constrained optimization problem. This formalizes and extends the intuitive reasoning discussed in the main text, confirming that our rotational reparameterization is geometrically faithful and optimization-efficient.

\begin{theorem}
\label{thm:reparam-U}

Consider $\Phi(B,U,\vartheta)$ as defined in Proposition \ref{thm:ProdManifoldConvergenceFull}. and fix an orthonormal matrix $\overline U\in \mathrm{St}(d,r)$. Suppose $\theta\mapsto R(\theta)$ is surjective (or dense) in $O(d) = \{R \in \mathbb{R}^{d \times d} | R^TR = \mathbf{I}_d\}$. Define the Euclidean reparameterisation
\[
   \widehat{\Phi}(B,\theta,\vartheta)
      := \Phi\!\bigl(B ,R(\theta)\,\overline U\overline U^{\!\top}
                           R(\theta)^{\!\top},\;\vartheta\bigr).
\]
Then the sets of local minima of $\widehat{\Phi}(B,\theta,\vartheta)$ and
 $\Phi(B,U,\vartheta)$ coincide.
 \end{theorem}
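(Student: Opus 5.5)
The plan is to treat the statement as a comparison of local minima under the smooth surjective map $\Pi:(B,\theta,\vartheta)\mapsto (B,R(\theta)\overline U,\vartheta)$. Note that $\widehat\Phi$ depends on $U$ only through the projector $R(\theta)\overline U\overline U^{\top}R(\theta)^{\top}$, and $\Phi(B,U,\vartheta)=F(BUU^\top,\vartheta)$ likewise depends on $U$ only through $UU^\top$. So $\widehat\Phi=\Phi\circ\Pi$, and "coincide" is read as: $(B,\theta,\vartheta)$ is a local minimizer of $\widehat\Phi$ if and only if $\Pi(B,\theta,\vartheta)$ is a local minimizer of $\Phi$, with every local minimizer of $\Phi$ arising this way (up to the $O(r)$ invariance $U\mapsto UO$). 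First I check that $\Pi$ is onto $\mathbb{R}^{d\times d}\times\mathrm{St}(d,r)\times\mathbb{R}^p$. This holds because $O(d)$ acts transitively on $\mathrm{St}(d,r)$: any $U$ can be completed to an orthonormal basis, so $U=R\overline U$ for some $R\in O(d)$, and by the hypothesis that $\theta\mapsto R(\theta)$ is surjective this $R$ equals $R(\theta)$ for some $\theta$.

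\emph{Direction 1} ($\Phi$-minimum $\Rightarrow$ $\widehat\Phi$-minimum) is the easy one. If $(B^\star,U^\star,\vartheta^\star)$ minimizes $\Phi$ on a neighbourhood $\mathcal N$, then for any preimage point $(B^\star,\theta^\star,\vartheta^\star)$, continuity of $\Pi$ gives a neighbourhood $\widehat{\mathcal N}$ with $\Pi(\widehat{\mathcal N})\subseteq\mathcal N$. Hence $\widehat\Phi=\Phi\circ\Pi\ge\Phi(B^\star,U^\star,\vartheta^\star)=\widehat\Phi(B^\star,\theta^\star,\vartheta^\star)$ on $\widehat{\mathcal N}$. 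No regularity beyond continuity is needed here.

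\emph{Direction 2} ($\widehat\Phi$-minimum $\Rightarrow$ $\Phi$-minimum) is the main obstacle. Continuity alone does not suffice, since a pullback can acquire spurious minima wherever $\Pi$ fails to be locally open. I would prove that $\theta\mapsto R(\theta)\overline U$ is an open map near the point in question, so that images of neighbourhoods of $\theta^\star$ contain neighbourhoods of $U^\star$. Then any nearby $(B,U,\vartheta)$ has a nearby preimage, and $\Phi(B,U,\vartheta)=\widehat\Phi(\cdot)\ge\widehat\Phi^\star=\Phi^\star$.

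Openness has two parts. The orbit map $O(d)\to\mathrm{St}(d,r)$, $R\mapsto R\overline U$, is a submersion: its differential at $R$ sends $R\Omega$, with $\Omega$ skew, to $R\Omega\overline U$, and these vectors span $T_{R\overline U}\mathrm{St}(d,r)$. It is therefore open. For $\theta\mapsto R(\theta)$ itself, I would use the exponential parameterisation $R(\theta)=R_0\exp(\sum\theta_lA_l)$ with $\{A_l\}$ spanning $\mathfrak o(d)$, re-centred at $R(\theta^\star)$ if needed. The exponential is a local diffeomorphism near $0$, so the composite is a local submersion. Where $\theta$ only parameterises $O(d)$ densely or non-submersively, I would state that the argument requires a local-surjectivity (submersion) hypothesis at $\theta^\star$, matching the paper's remark in \S\ref{sec:comm-cost}, and fall back on it explicitly.

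Finally, the reparameterisation only enlarges fibres: the stabilizer of $\overline U$ and the $O(r)$ gauge act on the fibres, and $\widehat\Phi$ is constant along them. So the minima correspond as sets of fibres, which is the asserted coincidence. As a consistency check on first-order conditions, the chain rule from \S\ref{sec:reparam-geometry} gives $\nabla_\theta\widehat\Phi=D_\theta U^\top\mathrm{grad}_U\Phi$, and surjectivity of $D_\theta U$ makes the critical sets match as well.
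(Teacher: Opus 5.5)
Your proof has the same skeleton as the paper's. Both use the lifting map $(B,\theta,\vartheta)\mapsto(B,R(\theta)\overline U,\vartheta)$, get its surjectivity by completing $U$ to an orthogonal matrix, use plain continuity for ``local minimum of $\Phi$ $\Rightarrow$ local minimum of $\widehat\Phi$'', and use a nearby-preimage argument for the converse.

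The difference is how that last step is justified, and your version is the sound one. The paper asserts that ``by surjectivity (or density)'' one can pick $\theta_k$ with $U_k=R(\theta_k)\overline U$ \emph{and} $\theta_k\to\theta^\star$. That is exactly openness of $\theta\mapsto R(\theta)\overline U$ at $\theta^\star$, and it does not follow from surjectivity. Take a parameterization that is onto but folds back on itself at $\theta^\star$, e.g.\ a rotation angle $h(\theta)$ with $h$ onto but having a strict local maximum at $\theta^\star$. It pulls a function that strictly decreases in that angle back to one with a local minimum at $\theta^\star$, which is spurious. Under mere density even the easy direction fails, because a minimizer $U^\star$ need not have a preimage. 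So the local-submersion hypothesis you say you would add is not a weakness of your argument; the statement needs it. Likewise, your fibre-level formulation fixes the paper's closing claim that the lifting map induces a \emph{bijection} of minima, which cannot hold for a many-to-one map.

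One point to tighten: you cannot ``re-centre at $R(\theta^\star)$'', since changing the parameterization changes $\widehat\Phi$ and hence its local minima. For $R(\theta)=\exp(\sum_l\theta_lA_l)$ with $\{A_l\}$ spanning $\mathfrak{o}(d)$, what you need is that $d\exp$ be nonsingular at $X^\star=\sum_l\theta^\star_lA_l$. This holds whenever no two eigenvalues $ia,ib$ of $X^\star$ satisfy $a-b\in2\pi\mathbb{Z}\setminus\{0\}$. In particular it holds whenever $\|X^\star\|_2<\pi$, which the paper's clipping of $\|\theta\|$ enforces. At singular points the exponential is genuinely not open: near a $2\pi$-rotation in $SO(3)$, the image of a neighbourhood is a double cone at the identity, not a neighbourhood. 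This is precisely where the hypothesis must be imposed rather than derived.
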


 \begin{proof}
Denote by $\phi(B,\theta, \vartheta) = (B, U,\;\vartheta )$ the \emph{lifting map} from the
re-parameterised domain 
$\mathcal D_1=\mathbb R^{d\times d}\times\Theta$
to the original domain 
$\mathcal D_2=\mathbb R^{d\times d}\times\mathrm{St}(d,r)$, where $\theta \in \Theta$.

For any $U\in\mathrm{St}(d,r)$ extend its columns to an orthogonal matrix
$R\in O(d)$ with $UR^\top=\overline U$; surjectivity of $\theta\!\mapsto\!R(\theta)$
(or its denseness plus continuity) yields $\theta$ with $R(\theta)=R$,
so $U=R(\theta)\overline U$.  
Hence $\phi(\mathcal D_1)=\mathcal D_2$. That is, $\phi$ is continuous and surjective.

Now, let $(B^\star,\theta^\star,\vartheta^*)$ be a local minimiser of $\widehat{\Phi}$ and set
$U^\star:=R(\theta^\star)\overline U=\phi(B^\star,\theta^\star)_U$.
Assume, towards a contradiction, that $(B^\star,U^\star, \vartheta^*)$ is
\emph{not} a local minimiser of $\Phi$.
Then there exists a sequence $(B_k,U_k)\!\to\!(B^\star,U^\star)$ with
$\Phi(B_k,U_k, \vartheta_k)\!<\!\Phi(B^\star,U^\star, \vartheta^*)$.
By surjectivity (or density) pick
$\theta_k$ such that $U_k=R(\theta_k)\overline U$ and
$\theta_k\!\to\!\theta^\star$.
Continuity of $\widehat{\Phi} = \Phi \circ \phi$ gives
\(
\widehat{\Phi}(B_k,\theta_k, \vartheta_k)=\Phi(B_k,U_k,\vartheta_k)
               < \Phi(B^\star,U^\star,\vartheta^*)
               = \widehat{\Phi}(B^\star,\theta^\star, \vartheta^*),
\)
contradicting local minimality of $\widehat{\Phi}$. Therefore, local minima of $\hat{\Phi}$ lift to local minima of $\Phi$.
Let $(B^\star,\theta^\star)$

Conversely, let $(\widehat B,\widehat U, \widehat{\vartheta})$ be a local minimiser of
$\Phi$.
Choose any $R\in O(d)$ with $\widehat U=R\,\overline U$ and
pick $\widehat\theta$ such that $R(\widehat\theta)=R$.
If $(B,\theta, \vartheta)\!\to\!(\widehat B,\widehat\theta, \widehat{\vartheta})$ then
$\phi(B,\theta, \vartheta)\!\to\!(\widehat B,\widehat U, \widetilde{\vartheta})$, hence
\(
\widehat{\Phi}(B,\theta, \vartheta)=\Phi(B,U,\vartheta)
           \ge\Phi(\widehat B,\widehat U, \widehat{\vartheta})
           =\widehat{\Phi}(\widehat B,\widehat\theta, \widehat{\vartheta}),
\)
so $(\widehat B,\widehat\theta, \widehat{\vartheta})$ is a local minimiser of $\widehat{\Phi}$.

Thus, $\phi$ induces a bijection between the sets of
local minima of $\Phi$ and $\widehat{\Phi}$; hence the two
optimisation problems have exactly the same local minima.
\end{proof}

Theorem~\ref{thm:reparam-U} proves that the Euclidean
reparameterisation
\(
   U(\theta)=R(\theta)\,\overline U,\;
   R(\theta)\in O(d)
\)
preserves \emph{stationary points}.
In this section we take a deeper look at how the transformation
affects the \emph{surrounding loss landscape}, focusing on curvature,
conditioning, and global distortions.
Specifically, we show that by constraining  $\|\theta\|$ to a moderate range (via clipping), one can make sure that the curvature of the re-parameterized landscape remain similar to that of the original objective. Throughout, $\Phi(B,U,\vartheta)$ denotes the original objective,
while
\(
  \widehat\Phi(B,\theta,\vartheta)
  :=\Phi\!\bigl(B,R(\theta)\,\overline U,\vartheta\bigr)
\)
is its pull-back to the unconstrained parameters~$\theta$.

\subsubsection{Hessian pull–back formula}
Let
\(
   g_U = \nabla_U\Phi\in\R^{d\times r},
   \qquad
   H_U = \nabla_{UU}^2\Phi\in\R^{dr\times dr}
\)
be the gradient and intrinsic (Euclidean) Hessian \emph{w.r.t. the
Stiefel coordinates}.
Define the \emph{Jacobian} of the reparameterisation
\(
   J(\theta)=\dfrac{\partial U(\theta)}{\partial\theta}
            \in\R^{dr\times m},
   \quad m=\dim\Theta .
\)

\begin{lemma}
\label{lem:hessian_pullback}
For any $\theta$ the Euclidean Hessian of $\widehat\Phi$ satisfies
\begin{equation}
\label{eq:hess_pull}
   H_\theta
   :=\nabla^2_{\theta\theta}\widehat\Phi
   \;=\;
   J^\top H_U\,J
   \;+\;
   \sum_{i=1}^{m}
     \bigl(\partial_{\theta_i}J\bigr)^\top g_U .
\end{equation}
At \emph{stationary points}---i.e.\ when $g_U=0$---the second term
vanishes and
\begin{equation}
\label{eq:hess_congruence}
      H_\theta \;=\; J^\top H_U\,J .
\end{equation}
\end{lemma}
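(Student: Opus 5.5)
The plan is to apply the second-order chain rule to the composition $\widehat\Phi(\theta)=\Phi(U(\theta))$, with $B$ and $\vartheta$ held fixed. We identify $U\in\R^{d\times r}$ with its vectorisation $u=\mathrm{vec}(U)\in\R^{dr}$, so that $\theta\mapsto u(\theta)$ is a smooth map $\R^m\to\R^{dr}$ with Jacobian $J(\theta)$. Write $J_{:,i}=\partial_{\theta_i}u$. The first derivative is then
\[
\partial_{\theta_i}\widehat\Phi=\langle g_U,\partial_{\theta_i}u\rangle = (J^\top g_U)_i .
\]
This is the same pull-back gradient identity already used in \S\ref{sec:reparam-geometry}.

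Next we differentiate $\partial_{\theta_i}\widehat\Phi$ once more in $\theta_j$ using the product rule. Two terms appear.
\begin{itemize}
\item Differentiating $g_U(u(\theta))$ gives $H_U\,\partial_{\theta_j}u$, which contributes $(\partial_{\theta_i}u)^\top H_U(\partial_{\theta_j}u)=(J^\top H_UJ)_{ij}$.
\item Differentiating $\partial_{\theta_i}u$ gives $\langle g_U,\partial^2_{\theta_i\theta_j}u\rangle$.
\end{itemize}
Collecting the second term over $i,j$ gives the matrix whose $i$-th row is $(\partial_{\theta_i}J)^\top g_U$, since $\partial_{\theta_i}J\in\R^{dr\times m}$ has columns $\partial^2_{\theta_i\theta_j}u$. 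This is exactly the sum written in \eqref{eq:hess_pull}, read as stacking the contributions indexed by $i$ (equivalently, the contraction of the second derivative tensor of $u$ against $g_U$). Setting $g_U=0$ kills this term and leaves the congruence $H_\theta=J^\top H_UJ$.

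The main subtlety is interpreting $g_U$ and $H_U$ correctly. If $g_U$ is the Euclidean gradient of $\Phi$ extended to $\R^{d\times r}$, the formula is just the Euclidean chain rule and holds verbatim. The issue is that at a Riemannian stationary point on $\mathrm{St}(d,r)$ only the tangential projection $\pi_{T_U}g_U$ vanishes, not the full Euclidean gradient. I would handle this in one of two ways. The first is to state the lemma with $g_U$ understood as the full ambient gradient, so that ``stationary'' means $g_U=0$, as the lemma literally says. The second is to note that $u(\theta)=R(\theta)\overline U$ stays on the Stiefel manifold. Then $\partial^2_{\theta_i\theta_j}u$ decomposes into a tangent part plus a normal part, and the normal part pairs only with the normal component of $g_U$. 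That produces the usual Weingarten curvature correction, which can be absorbed by replacing $H_U$ with the Riemannian Hessian. I would present the first version as the proof, since it is all that the statement as written requires, and add a remark that the tangential component of the correction term vanishes under Riemannian stationarity.
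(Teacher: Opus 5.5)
Your proposal is correct and follows essentially the same route as the paper: vectorise $U$, write $\nabla_\theta\widehat\Phi = J^\top g_U$, and differentiate again by the product rule to get $J^\top H_U J$ plus the contraction of $g_U$ with the second derivatives of $u(\theta)$. Reading the correction term as the matrix $\bigl(\langle g_U,\partial^2_{\theta_i\theta_j}u\rangle\bigr)_{ij}$ is the right interpretation of the paper's loosely written sum. Your caution about stationarity is also warranted: the paper's proof claims that stationarity of $\widehat\Phi$ forces $g_U=0$, but it only gives $J^\top g_U=0$ (at best the vanishing of the tangential part on $\mathrm{St}(d,r)$), so taking $g_U=0$ as the hypothesis, as the lemma statement does, is the correct reading of the second claim.
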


\begin{proof}[Proof of Lemma~\ref{lem:hessian_pullback}]
Fix $(B,\theta,\vartheta)$ and denote\footnote{%
For clarity we \emph{vectorise} matrices, writing
$u=\operatorname{vec}(U)\in\R^{dr}$,
$g_U=\nabla_{u}\Phi\in\R^{dr}$,
and
$H_U=\nabla^{2}_{uu}\Phi\in\R^{dr\times dr}$.%
}
\(
   u(\theta)=\operatorname{vec}\!\bigl(U(\theta)\bigr),\;
   g_U(u)=\nabla_u\Phi,\;
   J(\theta)=\dfrac{\partial u}{\partial\theta}\in\R^{dr\times m}.
\)

By the multivariate chain rule
\[
   \nabla_\theta\widehat\Phi
   \;=\;
   J^{\!\top}\,g_U .
\]

Differentiate once more:
\[
   H_\theta
   \;=\;
   \nabla_\theta\!\bigl(J^{\!\top}g_U\bigr)
   \;=\;
   \bigl(\nabla_\theta J^{\!\top}\bigr)g_U
   \;+\;
   J^{\!\top}\!\bigl(\nabla_\theta g_U\bigr).
\]
Compute the two addends separately.

\smallskip
\noindent\emph{(i) Derivative of the Jacobian.}
Write the $i$-th column of $J$ as $J_{(:,i)}$.
Then \(
   (\nabla_\theta J^{\!\top})g_U
   =\sum_{i=1}^{m}\!
      \bigl(\partial_{\theta_i}J\bigr)^{\!\top} g_U .
\)

\smallskip
\noindent\emph{(ii) Derivative of the pulled-back gradient.}
Because \(g_U\) is a function of \(u(\theta)\),
\[
   \nabla_\theta g_U
   \;=\;
   \bigl(\nabla_u g_U\bigr)\,\nabla_\theta u
   \;=\;
   H_U\,J .
\]
Substituting (i) and (ii) yields
\[
   H_\theta
   \;=\;
   J^{\!\top} H_U\,J
   \;+\;
   \sum_{i=1}^{m}
     \bigl(\partial_{\theta_i}J\bigr)^{\!\top} g_U ,
\]
which is precisely \eqref{eq:hess_pull}.  
If $\theta$ is a stationary point of $\widehat\Phi$,
then $g_U=0$ and the second term vanishes, giving
\eqref{eq:hess_congruence}.
\end{proof}

\paragraph{Spectral consequences.}
Writing the singular values of $J$ as
\(
  \sigma_1\ge\sigma_2\ge\!\dotsb\!\ge\sigma_r>0
\)
and the ordered eigenvalues of $H_U$ as
\(
  \lambda_1\ge\dotsb\ge\lambda_r,
\)
classic congruence inequalities yield
\begin{equation}
\label{eq:eigen_bounds}
  \sigma_r^{\,2}\,\lambda_r
  \;\le\;
  \lambda_{\min}(H_\theta)
  \;\le\;
  \lambda_{\max}(H_\theta)
  \;\le\;
  \sigma_1^{\,2}\,\lambda_1 .
\end{equation}
Hence the \textbf{condition number} transforms as
\(
  \kappa(H_\theta)
  \le
  \bigl(\sigma_1/\sigma_r\bigr)^{2}\,\kappa(H_U).
\)
Good scaling of~$J$ is therefore essential for maintaining a
well-conditioned landscape.

\subsubsection{Local properties with the exponential map}
We instantiate $R(\theta)$ with the exponential map
\begin{equation}
\label{eq:exp_map}
   R(\theta)
   \;=\;
   \exp\Bigl(\sum_{i=1}^{m}\theta_i A_i\Bigr),
   \qquad
   A_i^\top=-A_i,
   \quad
   \langle A_i,A_j\rangle_F=\delta_{ij},
\end{equation}
where $\{A_i\}$ is an \emph{orthonormal basis} of the Lie algebra
$\mathfrak{o}(d)$.
For $\|\theta\|\ll1$ the first-order expansion gives
\[
   R(\theta)=I+\!\sum_i\theta_iA_i+\mathcal O\!\bigl(\|\theta\|^{2}\bigr),
   \quad
   J_i
     :=\frac{\partial U}{\partial\theta_i}
     =A_i\,\overline U+\mathcal 
 O\!\bigl(\|\theta\|\bigr).
\]
Because the $A_i\overline U$ are \emph{orthonormal} in the embedded
space $\R^{d\times r}$, the Gram matrix satisfies
\(J^\top J = I + \mathcal  O(\|\theta\|)\).
Hence near $\theta=0$,
\[
   \sigma_1,\sigma_r = 1 + \mathcal 
 O(\|\theta\|),
   \quad
   \kappa(J)=1+ \mathcal  O(\|\theta\|),
\]
so by~\eqref{eq:eigen_bounds} the condition number of $H_\theta$
matches that of $H_U$ up to first order.  
\emph{Locally}, therefore, the exponential map is almost
\emph{isometric}: plain SGD or Adam on~$\theta$ senses essentially the
same curvature as a Riemannian optimiser on~$U$.

\begin{corollary}
\label{cor:local_cond}
Fix any radius $\rho<\pi$ and let
\(
  \mathcal B_\rho=\{\theta\in\Theta \mid \|\theta\|_2\le\rho\}.
\)
There exists $c_\rho>0$ such that for all
$\theta\in\mathcal B_\rho$ and stationary $(B,\theta,\vartheta)$,
\[
   \frac{1}{1+c_\rho}\,H_U
   \;\preceq\;
   H_\theta
   \;\preceq\;
   (1+c_\rho)\,H_U .
\]
In particular,
\(
  \kappa(H_\theta)\le(1+c_\rho)^2\,\kappa(H_U).
\)
\end{corollary}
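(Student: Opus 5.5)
The plan is to combine the pull-back formula of Lemma~\ref{lem:hessian_pullback} at stationary points with a uniform control of the Jacobian $J(\theta)$ over the compact ball $\mathcal B_\rho$. At a stationary point $g_U=0$, so \eqref{eq:hess_congruence} gives $H_\theta=J^\top H_U J$. The statement compares $H_\theta$ with $H_U$ in the Loewner order, but the two matrices live in different spaces ($m\times m$ versus $dr\times dr$). I would therefore read the comparison as taking place on the tangent directions realized by the parameterization. Concretely, I would identify $\theta$-directions with $J\theta$-directions, so the claim becomes
\[
\tfrac{1}{1+c_\rho}\,v^\top J^\top H_U J v \le v^\top H_\theta v \le (1+c_\rho)\,v^\top J^\top H_U J v
\]
after normalizing by the metric $J^\top J$. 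Equivalently, I would show that $J^\top J$ is uniformly equivalent to the identity on $\mathcal B_\rho$:
\[
\tfrac{1}{1+c_\rho} I \preceq J(\theta)^\top J(\theta) \preceq (1+c_\rho) I .
\]
This is exactly the input needed in the congruence bounds \eqref{eq:eigen_bounds}.

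The key step is to compute $J$ exactly rather than only to first order. For $R(\theta)=\exp(S(\theta))$ with $S(\theta)=\sum_i\theta_iA_i$, the derivative of the exponential map gives
\[
\partial_{\theta_i}R=R(\theta)\,\frac{1-e^{-\mathrm{ad}_S}}{\mathrm{ad}_S}(A_i),
\qquad
J_i=R(\theta)\,\mathrm{dexp}_{-S}(A_i)\,\overline U .
\]
Left multiplication by the orthogonal $R(\theta)$ is an isometry of $\R^{d\times r}$, so $J^\top J$ depends only on the operator $\mathrm{dexp}_{-S}=\frac{1-e^{-\mathrm{ad}_S}}{\mathrm{ad}_S}$ composed with $A\mapsto A\overline U$.

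Since $S$ is skew-symmetric, $\mathrm{ad}_S$ is skew-adjoint on $\mathfrak o(d)$ with purely imaginary eigenvalues $i\omega$, where $|\omega|\le 2\|S\|_2\le 2\|\theta\|$. The eigenvalues of $\mathrm{dexp}$ are then $(1-e^{-i\omega})/(i\omega)$, whose modulus is $|\sin(\omega/2)|/|\omega/2|$. On $\mathcal B_\rho$ with $\rho<\pi$ this modulus is bounded below by $\sin(\rho)/\rho>0$ and above by $1$. This is precisely where the hypothesis $\rho<\pi$ enters, since $\mathrm{dexp}$ first becomes singular at $|\omega|=2\pi$.

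Combining this with the near-orthonormality of the $A_i\overline U$ assumed in the preceding discussion yields singular values of $J$ in $[\sigma_-,\sigma_+]$, with $\sigma_\pm$ depending only on $\rho$. We then set $1+c_\rho:=\max\{\sigma_+^2,\sigma_-^{-2}\}$. Plugging into $H_\theta=J^\top H_U J$ and applying \eqref{eq:eigen_bounds} gives both the sandwich inequality and the bound $\kappa(H_\theta)\le(\sigma_+/\sigma_-)^2\kappa(H_U)\le(1+c_\rho)^2\kappa(H_U)$.

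I expect the main obstacle to be the lower bound on $\sigma_{\min}(J)$. The map $A\mapsto A\overline U$ is not injective on $\mathfrak o(d)$: skew matrices acting only within the orthogonal complement of $\mathrm{span}(\overline U)$ are killed. So the orthonormality of the $A_i\overline U$ must be a genuine assumption on the chosen generators, not a consequence of $\langle A_i,A_j\rangle_F=\delta_{ij}$. Moreover, $\mathrm{dexp}_{-S}$ mixes generators and could rotate them into this kernel.

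My first attempt would be to restrict $\{A_i\}$ to a complement of the stabilizer algebra of $\overline U$ and to argue by continuity and compactness of $\mathcal B_\rho$. The smallest singular value of $J$ is a continuous function of $\theta$ that is positive at $\theta=0$. If it stays positive on the compact ball, it attains a positive minimum, which gives some $c_\rho$ without an explicit formula. Failing that, I would shrink the claim to $\rho$ small enough that the first-order estimate $J^\top J=I+\mathcal O(\|\theta\|)$ controls everything.
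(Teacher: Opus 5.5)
Your skeleton matches the paper's. The paper's proof is one sentence: combine $H_\theta=J^\top H_UJ$ with a bound $\|J^\top J-I\|_2\le c_\rho$ that is said to follow from the first-order expansion and continuity of $R(\theta)$ on $\mathcal B_\rho$. You correctly locate all the content in that bound, but your proposal does not establish it either.

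The step ``combining this with the near-orthonormality of the $A_i\overline U$ yields singular values of $J$ in $[\sigma_-,\sigma_+]$'' does not follow. $J$ is the composition $v\mapsto X=\sum_i v_iA_i\mapsto\mathrm{dexp}_{-S}X\mapsto(\mathrm{dexp}_{-S}X)\overline U$. A lower bound for a composition needs the last map to be bounded below on the range of the middle one. You only control $X\mapsto X\overline U$ on $V:=\mathrm{span}\{A_i\}$, and in general $\mathrm{dexp}_{-S}(V)\neq V$. Invertibility of $\mathrm{dexp}_{-S}$ on $\mathfrak o(d)$ (your $|\omega|<2\pi$) says nothing about whether $\mathrm{dexp}_{-S}(V)$ meets $\mathfrak h:=\{X\in\mathfrak o(d):X\overline U=0\}$.

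Your fallbacks do not repair this. Compactness yields a positive minimum only once positivity of $\sigma_{\min}(J)$ on all of $\mathcal B_\rho$ is known, which is the claim itself. Shrinking $\rho$ proves less than ``any $\rho<\pi$''.

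The premise is also unusable as stated:
\begin{itemize}
\item With $\{A_i\}$ a basis of all of $\mathfrak o(d)$, as in the paper, $\dim\ker J(\theta)=\binom{d-r}{2}$ at every $\theta\in\mathcal B_\rho$.
\item Exact orthonormality of the $A_i\overline U$ together with $\langle A_i,A_j\rangle_F=\delta_{ij}$ forces every $A_i$ into $\overline U\,\mathfrak o(r)\,\overline U^\top$. There your composition argument does work, but $U(\theta)U(\theta)^\top\equiv\overline U\,\overline U^\top$, so $\widehat\Phi$ does not depend on $\theta$.
\end{itemize}
Finally, a two-sided multiplicative comparison of Rayleigh quotients needs $H_U\succeq 0$ on $\mathrm{range}\,J$. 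At a strict saddle, $\tfrac{1}{1+c}H_U\preceq(1+c)H_U$ is already false, so the statement can only hold at minima.

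The missing idea is a coercivity estimate instead of composing singular-value bounds.
\begin{itemize}
\item Choose the generators inside $\mathfrak m:=\mathfrak h^\perp$, i.e.\ of the form $Z=Q\begin{pmatrix}\Omega&-K^\top\\K&0\end{pmatrix}Q^\top$ with $Q=[\overline U,\overline U_\perp]$, $\Omega\in\mathfrak o(r)$. Then $Y\overline U=(\pi_{\mathfrak m}Y)\overline U$ for all $Y\in\mathfrak o(d)$, and $\|Z\overline U\|_F^2=\|\Omega\|_F^2+\|K\|_F^2\ge\tfrac12\|Z\|_F^2$ on $\mathfrak m$.
\item Sharpen your spectral bound. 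The eigenvalues of $\mathrm{ad}_S$ on $\mathfrak o(d)$ are $\lambda_a+\lambda_b$ with $a<b$. Every nonzero $\lambda_a$ comes with $\bar\lambda_a$ at another index, so a short case check gives $|\omega|\le\|S\|_F=\|\theta\|_2\le\rho<\pi$.
\item Because $\mathrm{ad}_S$ is skew-adjoint, the symmetric part of $\mathrm{dexp}_{-S}$ is $\sinh(\mathrm{ad}_S)/\mathrm{ad}_S$, whose eigenvalues satisfy $\sin\omega/\omega\ge\sin\rho/\rho>0$.
\end{itemize}
Hence, for $X=\sum_i v_iA_i\in\mathfrak m$, and using that $R(\theta)$ is an isometry,
\[
\|Jv\|_F=\|(\pi_{\mathfrak m}\mathrm{dexp}_{-S}X)\overline U\|_F\ge\tfrac{1}{\sqrt2}\|\pi_{\mathfrak m}\mathrm{dexp}_{-S}X\|_F\ge\frac{\langle X,\mathrm{dexp}_{-S}X\rangle_F}{\sqrt2\,\|X\|_F}\ge\frac{\sin\rho}{\sqrt2\,\rho}\,\|v\|_2 .
\]
With $\sigma_{\max}(J)\le 1$, this gives $\tfrac{\sin^2\rho}{2\rho^2}I\preceq J^\top J\preceq I$ on $\mathcal B_\rho$, i.e.\ $1+c_\rho=2\rho^2/\sin^2\rho$.

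This is where $\rho<\pi$ genuinely enters: positivity of $\sin\omega/\omega$, which your cruder bound $|\omega|\le 2\|\theta\|$ would only give for $\rho<\pi/2$. The resulting $c_\rho$ does not vanish as $\rho\to 0$. With Frobenius-orthonormal generators, $J(0)^\top J(0)\neq I$ as soon as some $A_i$ has $K\neq 0$ (on pure $K$-directions the eigenvalue is $\tfrac12$), contrary to the paper's heuristic $c_\rho=\mathcal O(\rho)$.
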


\begin{proof}
Combine~\eqref{eq:hess_congruence} with the
bound $\|J^\top J-I\|_2\le c_\rho$ that follows from the
expansion above and continuity of $R(\theta)$ inside $\mathcal B_\rho$.
\end{proof}

\paragraph{Intuition.}
Corollary~\ref{cor:local_cond} asserts that, as long as the Lie–algebra
parameter stays inside the ball
\(\mathcal B_\rho=\{\theta:\|\theta\|_2\le\rho\}\) with \(\rho<\pi\), the
Hessian in the new coordinates, \(H_\theta\), differs from the original
Stiefel-space Hessian, \(H_U\), by no more than a scalar factor
\(1+c_\rho\) in either direction:
\[
   \frac{1}{1+c_\rho}\,H_U
   \;\preceq\;
   H_\theta
   \;\preceq\;
   (1+c_\rho)\,H_U,
   \qquad   c_\rho=\mathcal O(\rho).
\]
Here “\(\preceq\)” denotes the usual Loewner order, so the inequality
means that every quadratic form \(v^{\!\top}H_\theta v\) lies between
\(v^{\!\top}H_U v /(1+c_\rho)\) and
\((1+c_\rho)\,v^{\!\top}H_U v\).  Consequently the condition number is
inflated by at most the square of this factor,
\(\kappa(H_\theta)\le (1+c_\rho)^2\kappa(H_U)\),
and tends back to \(\kappa(H_U)\) as \(\rho\to0\).  In practical terms,
when the rotation angles encoded by \(\theta\) remain moderate
(\(\|\theta\|\lesssim 1\) rad), plain Euclidean optimisers experience
almost the same curvature, and therefore the same step-size stability
and convergence speed, as a Riemannian optimiser that works directly on
\(U\).  Only as \(\|\theta\|\) approaches the injectivity radius
(\(\approx\pi\)) does the Jacobian cease to be nearly orthonormal,
distorting the landscape enough to break this near-isometry. \textbf{In practice, we clip $\| \theta\| < 0.5$ to enforce this constraint.} Note that without this constraint, the models sometimes demonstrated unstable convergence.

\subsubsection{Global distortions and injectivity radius}
The exponential map is injective on
\(
  \|\theta\|_2<\pi\sqrt{2}
\)
(the minimal distance to the cut-locus).
As $\|\theta\|\!\to\!\pi$, several phenomena occur: \textbf{a) Jacobian degeneration.}Some singular values $\sigma_i(J)$
      collapse to zero, flattening curvature along their directions and
      introducing plateaus in~$\widehat\Phi$ even if $H_U$ is full-rank.
\textbf{b) Anisotropic stretching.}
      Other singular values \emph{blow up},
      turning moderate curvature in $H_U$ into steep walls in
      $\theta$-space.

\subsubsection{Practical takeaway}
Rotational reparameterisation via the exponential map yields a
\emph{locally} isometric embedding of the Stiefel manifold into
Euclidean space, ensuring that first-order methods experience
essentially unchanged curvature near the solution set.  
However, as the Lie-algebra coordinates move towards the injectivity
radius, the Jacobian may \emph{distort} the landscape dramatically.
Clipping or norm-projection of~$\theta$ acts like a trust-region
mechanism that retains favourable conditioning but risks biasing the
search if the feasible clip radius is chosen too small.

The practitioner’s rule-of-thumb is therefore:

\begin{informaltheorembox}
\emph{Maintain $\|\theta\|_2\lesssim1\text{ rad}$ whenever possible; if
larger rotations are essential, monitor $\|J\|_2$ or curvature
statistics and re-centre / rescale when they inflate or collapse.}
\end{informaltheorembox}

\subsection{Rank collapse of the attention weights}

To rigorously characterize the collapse of auxiliary projection heads onto data-dependent subspaces, as discussed in the main text, we present a detailed result and proof next. Specifically, we demonstrate formally how attention weights associated with infrequently activated directions shrink under optimization with weight decay. This confirms analytically that auxiliary projection heads become effectively redundant near convergence, justifying their safe removal and transition to a standard Transformer architecture without sacrificing accumulated predictive performance. First, we consider fixed projection matrices below. That is the case where $U$ is not data dependent.

\begin{proposition}
\label{thm:confine-to-U}
Let 
$
    U\,\in\,\mathbb{R}^{d\times r}, 
    \qquad U^{\top}U = I_r,
    \qquad 
    P := UU^{\top},\;
    P_\perp := I_d - P .
$
Fix any \(\ell_2\)-regularised objective of the form  
$
    \mathcal L(W) 
    \;=\;
    F\!\bigl(WP\bigr) 
    \;+\; 
    \frac{\lambda}{2}\,\|W\|_F^{2},
    \qquad W\in\mathbb{R}^{d\times k},\;
           \lambda>0.
$
Assume only that \(F:\mathbb{R}^{d\times k}\!\to\!\R\) is continuously differentiable.  
Consider the Gradient flow  \(\displaystyle \dot W(t) = -\nabla_W\mathcal L\bigl(W(t)\bigr)\),
Then the \emph{orthogonal component}
\(W_{\perp}:=WP_\perp\) obeys
\[
        \|W_{\perp}(t)\|_F
        \;=\;
           (1-\eta\lambda)^{\,t}\,\|W_{\perp}^{(0)}\|_F
\]

Hence \(W_{\perp}(t)\!\to\!0\), and every limit point satisfies  
\(W^\star \in \operatorname{col}(U)\).
\end{proposition}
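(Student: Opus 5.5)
The plan is to show that the loss gradient has no component along $P_\perp$, so that the orthogonal part of $W$ evolves under weight decay alone. First I compute the gradient. Since $W\mapsto WP$ is linear and $P$ is symmetric, the chain rule gives
\[
\nabla_W \mathcal L(W) \;=\; \nabla F(WP)\,P \;+\; \lambda W .
\]
Next I right-multiply by $P_\perp$. Because $P$ is an orthogonal projector, $PP_\perp = P - P^2 = 0$. Therefore
\[
\nabla_W\mathcal L(W)\,P_\perp \;=\; \lambda\, W P_\perp \;=\; \lambda W_\perp .
\]
This identity is the whole content of the argument. It uses only that $F$ is $C^1$; no convexity or smoothness constant is needed, which matches the weak hypothesis of the proposition.

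The formula in the statement, $(1-\eta\lambda)^t$, is the discrete-time one. I will therefore prove it for gradient descent with step $\eta$, reading $t$ as the iteration index, and record the gradient-flow analogue alongside.

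For gradient descent, $W^{(t+1)} = W^{(t)} - \eta\nabla_W\mathcal L(W^{(t)})$. Multiplying on the right by $P_\perp$ and using the identity above gives
\[
W_\perp^{(t+1)} = (1-\eta\lambda)\,W_\perp^{(t)} .
\]
Induction then yields the exact equality
\[
\|W_\perp^{(t)}\|_F = (1-\eta\lambda)^t\,\|W_\perp^{(0)}\|_F
\]
whenever $0<\eta\lambda\le 1$. This step-size condition should be added as a standing assumption. For $1<\eta\lambda<2$ the modulus becomes $|1-\eta\lambda|^t$ and the conclusion is unchanged.

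For the continuous flow $\dot W = -\nabla_W\mathcal L(W)$, the same projection gives $\dot W_\perp = -\lambda W_\perp$. Hence $W_\perp(t) = e^{-\lambda t} W_\perp(0)$, which is the $\eta\to0$ limit of the discrete formula. In either case $W_\perp\to 0$ geometrically.

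For the limit-point claim, let $W^\star$ be any limit point of the trajectory, say $W^{(t_j)}\to W^\star$. Continuity of $W\mapsto WP_\perp$ gives $W^\star P_\perp = \lim_j W_\perp^{(t_j)} = 0$. Hence $W^\star = W^\star P$, so every row of $W^\star$ lies in $\operatorname{col}(U)$. This is the precise meaning of ``$W^\star\in\operatorname{col}(U)$''.

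I do not expect a genuine mathematical obstacle. The only delicate point is bookkeeping: the statement mixes continuous-time notation with a discrete-step rate, and I will resolve this as described. I will also not claim that limit points exist. The argument applies to whatever limit points there are, and existence would require, for example, boundedness of the iterates, which the $\lambda$-regularisation gives if $F$ is bounded below and $\eta$ is small. That extra condition is not needed for the decay of $W_\perp$ itself.
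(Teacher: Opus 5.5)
Your proof is correct and follows the paper's argument: compute $\nabla_W\mathcal L = \nabla F(WP)\,P + \lambda W$, right-multiply the discrete gradient-descent update by $P_\perp$, use $PP_\perp = 0$ to get $W_\perp^{(t+1)} = (1-\eta\lambda)W_\perp^{(t)}$, and induct. Your added step-size condition $0<\eta\lambda\le 1$, the continuous-time analogue $W_\perp(t)=e^{-\lambda t}W_\perp(0)$, and your reading of the conclusion as $W^\star P_\perp = 0$ (rows of $W^\star$ lie in $\operatorname{col}(U)$, whereas the paper writes $P_\perp W\to 0$) are sound clarifications of points the paper's proof leaves implicit.
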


\begin{proof}
Let \(G := \nabla_XF(X)\bigl|_{X=WP}\in\R^{d\times k}\).
Then we have,
\[
    \nabla_W F(WP) 
    \;=\; 
    G\,P .
    \tag{1}
\]

\smallskip
Consider the full gradient of the regularized objective
\[
    \nabla_W\mathcal L(W) \;=\; G\,P + \lambda W .
    \tag{2}
\]

\smallskip

Define the parallel and orthogonal blocks
\(
    W_{\parallel}:=WP,\;
    W_{\perp}:=WP_\perp
\)
so that \(W = W_{\parallel}+W_{\perp}\).
Because \(PP_\perp = 0\),
\[
    W_{\parallel}P_\perp = 0,
    \quad 
    W_{\perp}P = 0 .
    \tag{3}
\]

\smallskip


      Using $\nabla_W\mathcal L(W) \;=\; G\,P + \lambda W$ in the update and projecting:
      \[
          W^{(t+1)}P_\perp
          \;=\;
          \bigl(
            W^{(t)} 
            - \eta (GP + \lambda W^{(t)})
          \bigr) P_\perp
          = (1-\eta\lambda)\,W^{(t)}P_\perp .
      \]
      Induction yields  
      \(\displaystyle 
         \|W_{\perp}^{(t)}\|_F
         = (1-\eta\lambda)^{t}\|W_{\perp}^{(0)}\|_F .
      \)

\smallskip
So we have \(\|W_{\perp}(t)\|_F\to0\).
Thus \(P_\perp W(t)\to 0\), i.e.\ every accumulation point
lies entirely in \(\operatorname{col}(U)\).  
\end{proof}

Next, we consider the case where $U$ is data dependent under full-batch gradient descent.

\begin{theorem}
\label{thm:subspace-decay-clean}
For every sample \(x\) let  
$
   U(x)\in\mathbb{R}^{d\times r},
   \qquad
   U(x)^{\top}U(x)=I_r,
   \qquad
   P(x):=U(x)U(x)^{\top}\in\mathbb{R}^{d\times d}.
$
Fix a loss family \(F_x:\mathbb{R}^{d\times k}\!\to\!\mathbb{R}\) that is
\(L\) bounded as:
$
   \bigl\|\nabla_ZF_x(Z)\bigr\|_F\;\le\;L
   \quad\forall x,\;Z.
$
Define the regularised objective
$
   \mathcal{L}(W)
   :=
   \mathbb{E}_{x}\!\bigl[F_x(WP(x))\bigr]
   +\frac{\lambda}{2}\,\|W\|_F^{2},
   \qquad \lambda>0,
   \qquad W\in\mathbb{R}^{d\times k}.
$
Let \(Q\in\mathbb{R}^{d\times d}\) be an orthogonal projector and set
\(W_Q:=WQ\).
Introduce the \emph{average spectral overlap}
$
     p_Q \;:=\;
     \mathbb{E}_{x}\bigl[\|P(x)Q\|_2\bigr]\in[0,1].
$
Run gradient descent
$
   W^{(t+1)}
   = W^{(t)}
   - \eta\,\nabla_W\mathcal{L}(W^{(t)}),
   \qquad 0<\eta\lambda<1.
$
Then for every \(t\ge0\)
$
   \;
     \|W_Q^{(t)}\|_F
     \;\le\;
     (1-\eta\lambda)^{t}\,\|W_Q^{(0)}\|_F
     +\frac{p_QL}{\lambda}\bigl[1-(1-\eta\lambda)^{t}\bigr]
   \;
$
and consequently
\[
   \limsup_{t\to\infty}\|W_Q^{(t)}\|_F
   \;\le\;\frac{p_QL}{\lambda}.
\]
\end{theorem}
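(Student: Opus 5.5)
The plan mirrors the proof of Proposition~\ref{thm:confine-to-U}: compute the gradient, right-multiply the update by $Q$, and obtain a one-step recursion for $\|W_Q^{(t)}\|_F$. The difference from the fixed-projector case is that the data term no longer annihilates the $Q$-component exactly. It only contributes a small term, and we control it by $p_Q L$.

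\textbf{Step 1 (gradient).} Set $G_x:=\nabla_Z F_x(Z)\big|_{Z=WP(x)}$. The map $W\mapsto WP(x)$ is linear and $P(x)$ is symmetric, so $\nabla_W F_x(WP(x)) = G_x P(x)$. Exchanging gradient and expectation is justified by the uniform bound $\|G_x\|_F\le L$ (dominated convergence). This gives
\[
\nabla_W\mathcal L(W)=\mathbb E_x\bigl[G_xP(x)\bigr]+\lambda W .
\]

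\textbf{Step 2 (projected recursion).} Right-multiplying the gradient-descent update by $Q$ yields
\[
W_Q^{(t+1)}=(1-\eta\lambda)\,W_Q^{(t)}-\eta\,\mathbb E_x\bigl[G_x P(x) Q\bigr].
\]
We take Frobenius norms, use $0<1-\eta\lambda<1$, and apply the triangle inequality together with Jensen's inequality for the norm. Then we use the mixed-norm bound $\|AB\|_F\le\|A\|_F\|B\|_2$, which gives $\|G_xP(x)Q\|_F\le L\,\|P(x)Q\|_2$. Taking expectations,
\[
\|W_Q^{(t+1)}\|_F\le(1-\eta\lambda)\|W_Q^{(t)}\|_F+\eta\,p_Q L .
\]

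\textbf{Step 3 (unrolling).} Let $a_t:=\|W_Q^{(t)}\|_F$, $\rho:=1-\eta\lambda$ and $b:=\eta p_QL$. Induction on $a_{t+1}\le\rho a_t+b$ gives
\[
a_t\le\rho^t a_0+b\sum_{s<t}\rho^s=\rho^t a_0+\frac{b(1-\rho^t)}{1-\rho}.
\]
Since $b/(1-\rho)=p_QL/\lambda$, this is exactly the stated bound. Letting $t\to\infty$ and using $\rho^t\to0$ gives the $\limsup$ claim.

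Step 2 is the only nontrivial step, and the care needed there is in choosing the norm inequality. Writing the bound with $\|P(x)Q\|_2$ rather than a Frobenius overlap requires placing the spectral norm on the projector factor. Jensen's inequality must also be applied before the per-sample bound, so that we obtain the average overlap $p_Q$ and not a worst-case one.

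It is also worth recording why $p_Q\in[0,1]$ and why no further assumptions are needed. Both $P(x)$ and $Q$ are orthogonal projectors, so $\|P(x)Q\|_2\le1$, which gives $p_Q\in[0,1]$. No smoothness of $F_x$ is needed beyond the existence of a gradient bounded by $L$, because the argument never uses second-order information.
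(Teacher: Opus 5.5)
Your proposal is correct and follows the paper's proof essentially line for line. You compute $\nabla_W\mathcal L(W)=\mathbb E_x[G_xP(x)]+\lambda W$, right-multiply the update by $Q$, bound $\|\mathbb E_x[G_xP(x)Q]\|_F\le Lp_Q$ via $\|G_xP(x)Q\|_F\le L\|P(x)Q\|_2$, and unroll the resulting inhomogeneous linear recursion; your explicit use of Jensen's inequality and of dominated convergence to exchange gradient and expectation makes precise two steps the paper leaves implicit.
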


\begin{proof}
For \(Z := WP(x)\) set \(G(x):=\nabla_ZF_x(Z)\).
Because \(\nabla_WF_x(WP(x)) = G(x)P(x)\), adding the
\(\ell_2\)-regulariser gives
\[
   \nabla_W\mathcal{L}(W)
   \;=\;
   \mathbb{E}_{x}[G(x)P(x)] + \lambda W .
\]

\smallskip\noindent

Right–multiplying the GD update by \(Q\),
\[
   W_Q^{(t+1)}
   = (1-\eta\lambda)\,W_Q^{(t)}
   - \eta\,\mathbb{E}_{x}[G(x)P(x)]\,Q .
\]

\smallskip\noindent
Using the bound and \(\|P(x)Q\|_2\le1\),
\[
   \|G(x)P(x)Q\|_F\;\le\;L\,\|P(x)Q\|_{2}.
\]
Taking expectation and the definition of \(p_Q\),
\[
   \bigl\|\mathbb{E}_{x}[G(x)P(x)]Q\bigr\|_F
   \;\le\; L\,p_Q .
\]

\smallskip\noindent
Applying the bound in the recursion,
\[
   \|W_Q^{(t+1)}\|_F
   \;\le\;
   (1-\eta\lambda)\|W_Q^{(t)}\|_F
   + \eta L p_Q .
\]

\smallskip\noindent
The inhomogeneous geometric series yields
\[
   \|W_Q^{(t)}\|_F
   \le
   (1-\eta\lambda)^{t}\|W_Q^{(0)}\|_F
   + \frac{Lp_Q}{\lambda}\bigl[1-(1-\eta\lambda)^{t}\bigr] .
\]

\smallskip\noindent
Since \(0<1-\eta\lambda<1\),
\(\displaystyle\lim_{t\to\infty}(1-\eta\lambda)^t=0\), giving
\(\displaystyle\limsup_{t\to\infty}\|W_Q^{(t)}\|_F\le p_QL/\lambda\).
\end{proof}

Finally, we extend the above result to the case where $U$ is data dependent and the networks is optimized via stochastic mini-batch gradient descent.

\begin{theorem}
\label{thm:subspace-decay-sgd}
For every sample \(x\) let  
$
   U(x)\in\mathbb{R}^{d\times r},
   \qquad
   U(x)^{\top}U(x)=I_r,
   \qquad
   P(x):=U(x)U(x)^{\top}\in\mathbb{R}^{d\times d}.
$
Fix a loss family \(F_x:\mathbb{R}^{d\times k}\!\to\!\mathbb{R}\) that is
\(L\) bounded as:
$
   \bigl\|\nabla_ZF_x(Z)\bigr\|_F\;\le\;L
   \quad\forall x,\;Z.
$
Define the regularised objective
$
   \mathcal{L}(W)
   :=
   \mathbb{E}_{x}\!\bigl[F_x(WP(x))\bigr]
   +\frac{\lambda}{2}\,\|W\|_F^{2},
   \qquad \lambda>0,
   \qquad W\in\mathbb{R}^{d\times k}.
$
Let \(Q\in\mathbb{R}^{d\times d}\) be an orthogonal projector and set
\(W_Q:=WQ\).
Introduce the \emph{average spectral overlap}
$
     p_Q \;:=\;
     \mathbb{E}_{x}\bigl[\|P(x)Q\|_2\bigr]\in[0,1].
$ 
At iteration $t=0,1,\dots$ draw an i.i.d.\ sample $x_t$ and perform
\[
   W^{(t+1)}
   \;=\;
   W^{(t)}
   -\eta_t\,g_t,
   \qquad
   g_t
   :=\nabla_{W}F_{x_t}\!\bigl(W^{(t)}P(x_t)\bigr)P(x_t)
     +\lambda\,W^{(t)},
\]
with stepsizes $\eta_t>0$ satisfying $\eta_t\lambda<1$.
Set $W_Q^{(t)}:=W^{(t)}Q$ and
$p_Q:=\mathbb{E}_{x}\bigl[\|P(x)Q\|_2\bigr]\in[0,1]$.

      If $\eta_t\equiv\eta$ and $0<\eta\lambda<1$, then for all $t\ge0$
      \[
         \mathbb{E}\!\bigl[\|W_Q^{(t)}\|_F\bigr]
         \;\le\;
         (1-\eta\lambda)^{t}\,\|W_Q^{(0)}\|_F
         +\frac{p_QL}{\lambda}\bigl[1-(1-\eta\lambda)^{t}\bigr]
      \]
      and hence
      \(
         \displaystyle
         \limsup_{t\to\infty}
         \mathbb{E}\!\bigl[\|W_Q^{(t)}\|_F\bigr]
         \le
         \dfrac{p_QL}{\lambda}.
      \)

 \end{theorem}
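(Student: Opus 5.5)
The plan mirrors the full-batch argument of Theorem~\ref{thm:subspace-decay-clean}, now applied pathwise to the stochastic iterates and then averaged.

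\textbf{Step 1: projected recursion.} Right-multiplying the SGD update by $Q$ gives, with $G_t := \nabla_Z F_{x_t}(Z)\big|_{Z=W^{(t)}P(x_t)}$,
\[
W_Q^{(t+1)} = (1-\eta\lambda)\,W_Q^{(t)} - \eta\, G_t\, P(x_t)\, Q .
\]
Here I read the gradient in the statement as $G_t P(x_t)$, exactly as in the proof of Theorem~\ref{thm:subspace-decay-clean}, so that the chain rule through $W\mapsto WP(x_t)$ produces the trailing factor $P(x_t)$.

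\textbf{Step 2: pathwise bound.} Apply the triangle inequality, then $\|AB\|_F\le\|A\|_F\|B\|_2$ together with $\|G_t\|_F\le L$. This yields, almost surely,
\[
\|W_Q^{(t+1)}\|_F \le (1-\eta\lambda)\|W_Q^{(t)}\|_F + \eta L\,\|P(x_t)Q\|_2 .
\]

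\textbf{Step 3: take expectations.} The key point is that $x_t$ is drawn i.i.d.\ and independently of the past. The random variable $\|P(x_t)Q\|_2$ depends only on $x_t$, not on $W^{(t)}$, so its expectation is exactly $p_Q$. Note that no conditioning subtlety arises: we bound $G_t$ uniformly by $L$ before taking the expectation, so the correlation between $G_t$ and $x_t$ never enters. Taking full expectations gives the scalar linear recursion
\[
e_{t+1} \le (1-\eta\lambda)\, e_t + \eta L p_Q, \qquad e_t := \mathbb{E}\|W_Q^{(t)}\|_F .
\]
Here $e_0=\|W_Q^{(0)}\|_F$ is deterministic.

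\textbf{Step 4: unroll.} Induction gives
\[
e_t \le (1-\eta\lambda)^t e_0 + \eta L p_Q \sum_{s=0}^{t-1}(1-\eta\lambda)^s .
\]
The geometric sum equals $\frac{1-(1-\eta\lambda)^t}{\eta\lambda}$, which gives the stated bound. Since $0<1-\eta\lambda<1$, the factor $(1-\eta\lambda)^t$ tends to $0$, and the $\limsup$ claim follows.

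\textbf{Main obstacle.} The one step needing care is Step 3. To make it rigorous, I would work with the filtration $\mathcal F_t=\sigma(x_0,\dots,x_{t-1})$, note that $W^{(t)}$ is $\mathcal F_t$-measurable, and use $\mathbb{E}[\|P(x_t)Q\|_2\mid\mathcal F_t]=p_Q$ by independence. Integrability of $\|W_Q^{(t)}\|_F$ follows by induction from the same pathwise bound, since each increment is bounded by $\eta L$.
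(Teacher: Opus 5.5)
Your proof is correct and follows the same route as the paper: right-multiply the update by $Q$, contract by $(1-\eta\lambda)$, bound the gradient term by $L\|P(x_t)Q\|_2$, and unroll the scalar recursion. Your ordering (pathwise triangle inequality first, expectation second) is the sound one. The paper instead takes the conditional expectation of the matrix recursion before applying the norm, which only controls $\|\mathbb{E}[W_Q^{(t+1)}\mid\mathcal F_t]\|_F$ and not $\mathbb{E}[\|W_Q^{(t+1)}\|_F\mid\mathcal F_t]$, since Jensen goes the wrong way. The paper's unrolled bound also carries a stray factor $\eta$, which your geometric-sum computation correctly cancels.
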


\begin{proof}
Let $\mathcal{F}_t:=\sigma\!\bigl(W^{(0)},\dots,W^{(t)}\bigr)$ be the
natural filtration.  All expectations $\mathbb{E}[\,\cdot\,]$ are taken over the
drawn samples $\{x_s\}_{s\le t}$.

Multiplying the update by $Q$ on the right gives
\[
      W_Q^{(t+1)}
      \;=\;
      W_Q^{(t)}-\eta_t\,g_tQ.
\]
Because $g_t$ is an \emph{unbiased} gradient estimate,
\(
      \mathbb{E}\!\bigl[g_t\mid\mathcal{F}_t\bigr]
      =\nabla_{W}\mathcal{L}\!\bigl(W^{(t)}\bigr).
\)

Take conditional expectation and use
\(
      \mathbb{E}\!\bigl[g_tQ\mid\mathcal{F}_t\bigr]
      =
      \mathbb{E}\!\bigl[\nabla_WF_{x_t}(\cdot)P(x_t)Q\mid\mathcal{F}_t\bigr]
      +\lambda W_Q^{(t)}:
\)
\[
\begin{aligned}
      \mathbb{E}\!\bigl[W_Q^{(t+1)}\mid\mathcal{F}_t\bigr]
      &= W_Q^{(t)}-\eta_t\lambda W_Q^{(t)}
           -\eta_t\,
             \mathbb{E}\!\bigl[\nabla_WF_{x_t}(\cdot)P(x_t)Q
                       \mid\mathcal{F}_t\bigr].
\end{aligned}
\]
Apply the Frobenius norm and the triangle inequality:
\[
      \mathbb{E}\!\bigl[\|W_Q^{(t+1)}\|_F\mid\mathcal{F}_t\bigr]
      \le (1-\eta_t\lambda)\|W_Q^{(t)}\|_F
           +\eta_t\,
             \Bigl\|
                \mathbb{E}\!\bigl[\nabla_WF_{x_t}(\cdot)P(x_t)Q
                                  \mid\mathcal{F}_t\bigr]
             \Bigr\|_F.
\]
Since $\|\nabla_ZF_x(Z)\|_F\le L$ for every $x$ and $Z$,
\[
      \bigl\|\nabla_WF_{x_t}(\cdot)P(x_t)Q\bigr\|_F
      \le
      L\,\|P(x_t)Q\|_2.
\]
Hence
\(
      \bigl\|
        \mathbb{E}\!\bigl[\nabla_WF_{x_t}(\cdot)P(x_t)Q
          \mid\mathcal{F}_t\bigr]
      \bigr\|_F
      \le Lp_Q .
\)
Thus
\begin{equation}\label{eq:cond-step-full}
      \mathbb{E}\!\bigl[\|W_Q^{(t+1)}\|_F\mid\mathcal{F}_t\bigr]
      \le
      (1-\eta_t\lambda)\|W_Q^{(t)}\|_F +\eta_t p_Q L .
\end{equation}

Let $\eta_t\equiv\eta$.
Taking full expectation of \eqref{eq:cond-step-full} yields
\(
      y_{t+1}\le (1-\eta\lambda)\,y_t + \eta p_Q L,
\)
where $y_t:=\mathbb{E}\!\bigl[\|W_Q^{(t)}\|_F\bigr]$.
Solving the linear recurrence gives
\[
      y_t
      \le
      (1-\eta\lambda)^{t}\,y_0
      +\frac{\eta p_Q L}{\lambda}
         \bigl[1-(1-\eta\lambda)^{t}\bigr].
\]
With $y_0=\|W_Q^{(0)}\|_F$ we obtain the stated bound; letting
$t\to\infty$ yields $\tfrac{p_QL}{\lambda}$.


\end{proof}

\section{Attention output analysis}
\label{app:rank}

We investigate the attention output activation rank structure of pretrained frontier open-weight models, specifically focusing on prominent architectures such as LLaMA \cite{touvron2023llama}, Qwen \cite{qwen2023}, and Olmo \cite{olmo2024} (Fig.~\ref{fig:llama}, \ref{fig:qwen}, and \ref{fig:olmo}, reepectively). Interestingly, our empirical analyses reveal a consistently observed low-rank structure across these diverse model families, suggesting that this phenomenon is intrinsic to transformer-based architectures rather than specific to certain model training procedures or datasets.

The low-rank behavior of attention outputs in transformers has garnered considerable interest, as it significantly impacts model efficiency, interpretability, and the potential for compression. Previous studies have documented similar findings; notably, Dong et al. \cite{dong2021attention} observed substantial rank reduction in the self-attention matrices of transformers during training, attributing it to implicit regularization effects induced by the training dynamics. Similarly, Abbe and colleagues \cite{abbe2024transformers} provided theoretical insights, demonstrating that rank collapse in attention mechanisms naturally arises from the iterative nature of gradient-based optimization processes.

Recent advancements in understanding transformer geometry and optimization further support our observations. Sanyal et al. \cite{sanyal2024inheritune} reported that transformers inherently favor lower-dimensional subspaces in their activations, leading to stable rank reduction, particularly in large-scale models. This intrinsic property has been leveraged in various model compression schemes, where exploiting the low-rank structure of attention outputs allows for significant reductions in memory footprint and computational overhead without adversely affecting model accuracy \cite{frantar2023gptq, zhao2024galore}.

Our findings confirm and extend these results by highlighting that the low-rank structure is not only prevalent across different model architectures but also robustly present across models trained on diverse datasets and with varying parameter scales. This observation underscores the potential universality of the low-rank phenomenon in transformer-based language models, further suggesting avenues for universally applicable model compression techniques and efficient parallelization strategies in decentralized training scenarios. 

\begin{figure}[h]
    \centering
    \begin{minipage}[b]{0.32\linewidth}
        \centering
        \includegraphics[width=\linewidth]{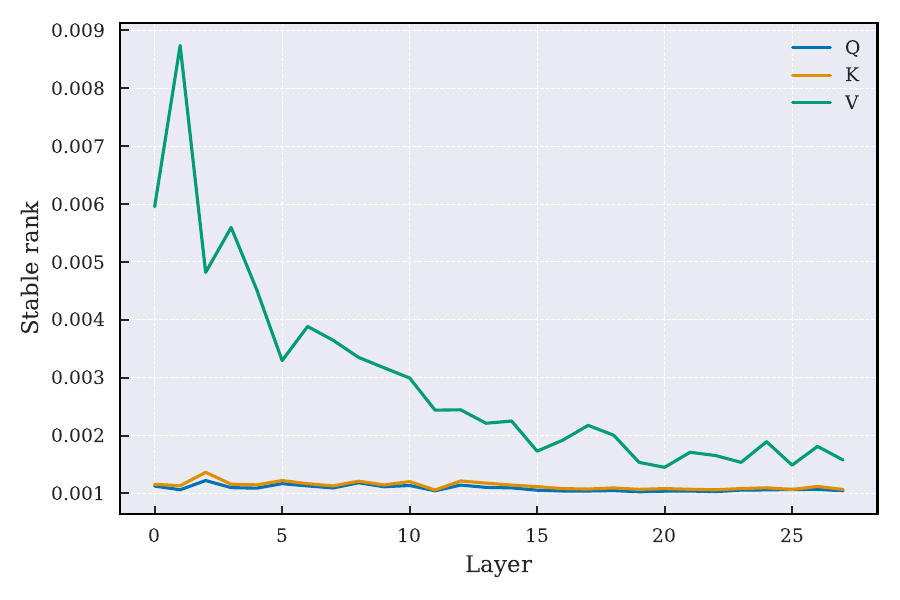}
        \caption*{\textsc{LlaMa-3.2-3B}}
    \end{minipage}
    \hfill
    \begin{minipage}[b]{0.32\linewidth}
        \centering
        \includegraphics[width=\linewidth]{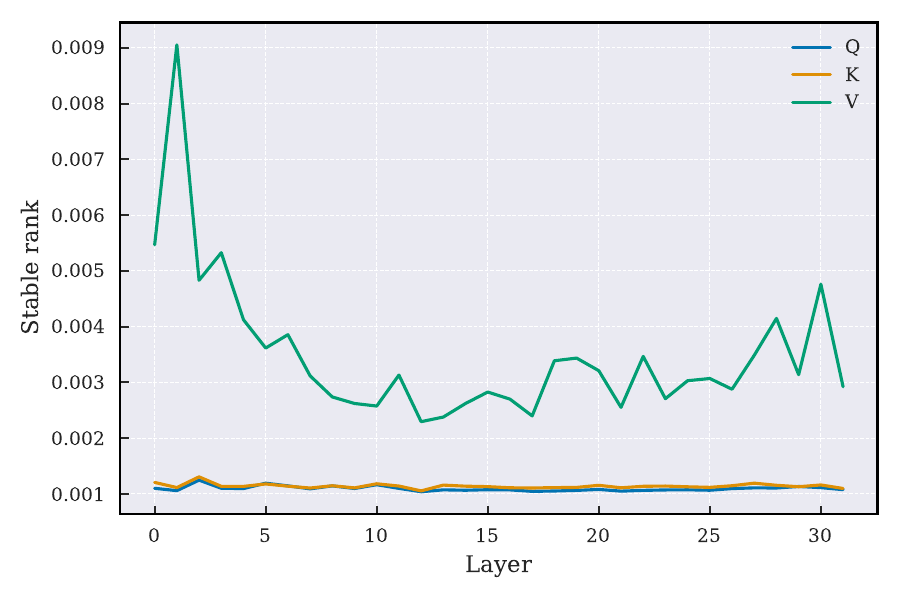}
        \caption*{\textsc{LlaMa-3.1-8B}}
    \end{minipage}
    \hfill
    \begin{minipage}[b]{0.32\linewidth}
        \centering
        \includegraphics[width=\linewidth]{figures/Meta-Llama-3-70B.pdf}
        \caption*{\textsc{LlaMa-3-70B}}
    \end{minipage}
    \vspace{-6pt}
    \caption{Stable rank distribution of the attention activations across \textsc{LlaMa 3} models normalized by their maximum possible rank. }
    \label{fig:llama}
\end{figure}

\begin{figure}[h]
    \centering
    \begin{minipage}[b]{0.32\linewidth}
        \centering
        \includegraphics[width=\linewidth]{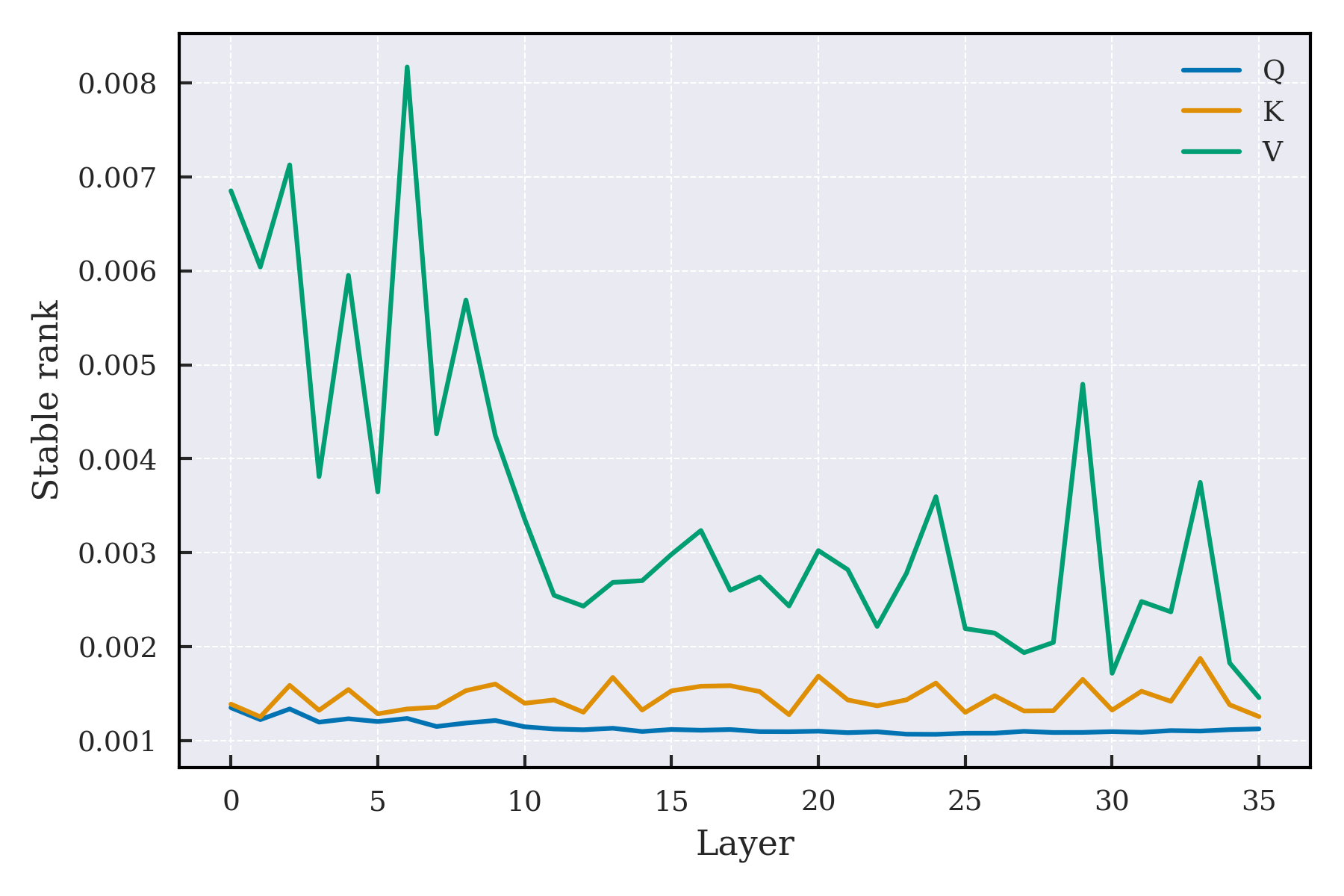}
        \caption*{\textsc{Qwen-3-8B}}
    \end{minipage}
    \hfill
    \begin{minipage}[b]{0.32\linewidth}
        \centering
        \includegraphics[width=\linewidth]{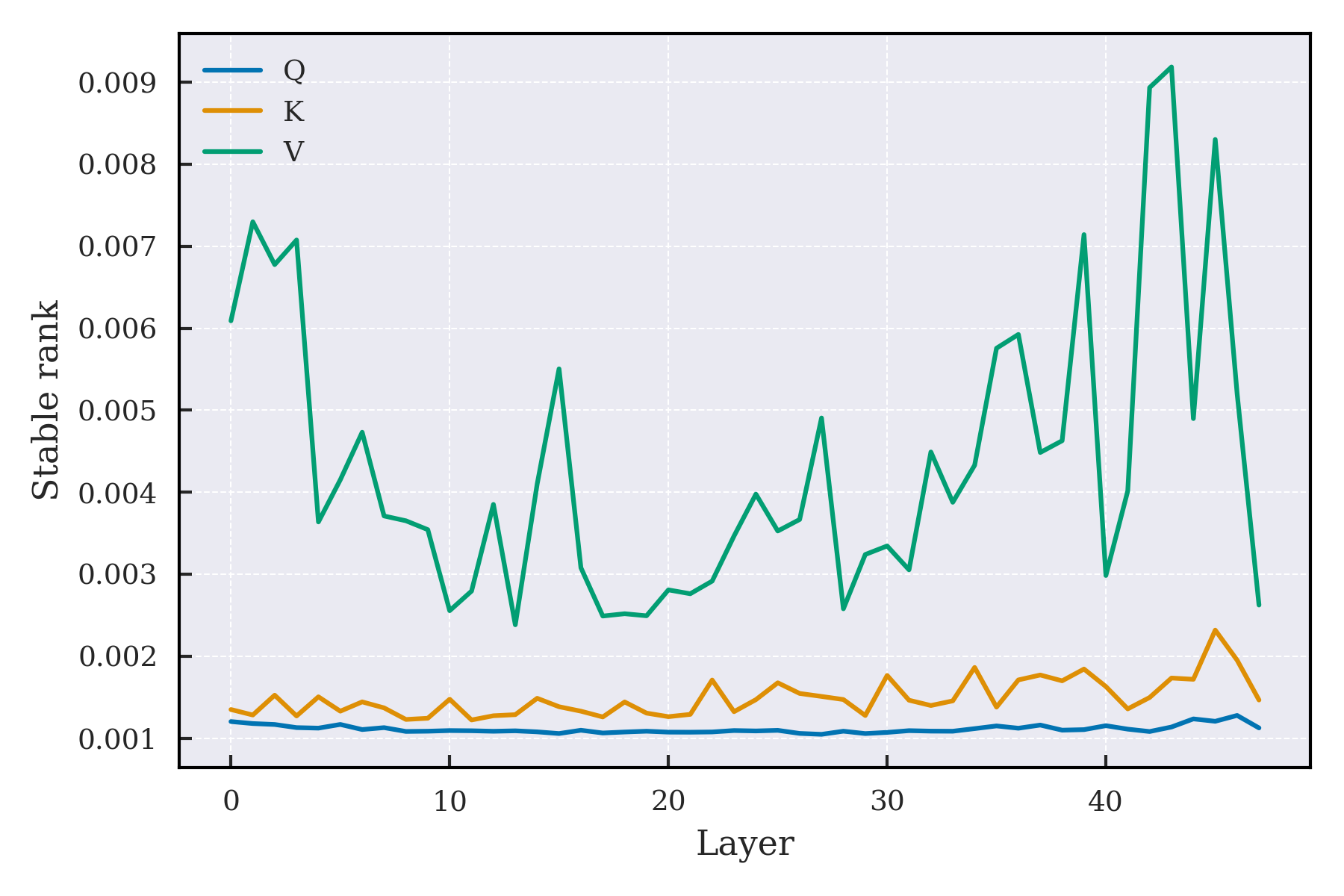}
        \caption*{\textsc{Qwen-3-30B}}
    \end{minipage}
    \hfill
    \begin{minipage}[b]{0.32\linewidth}
        \centering
        \includegraphics[width=\linewidth]{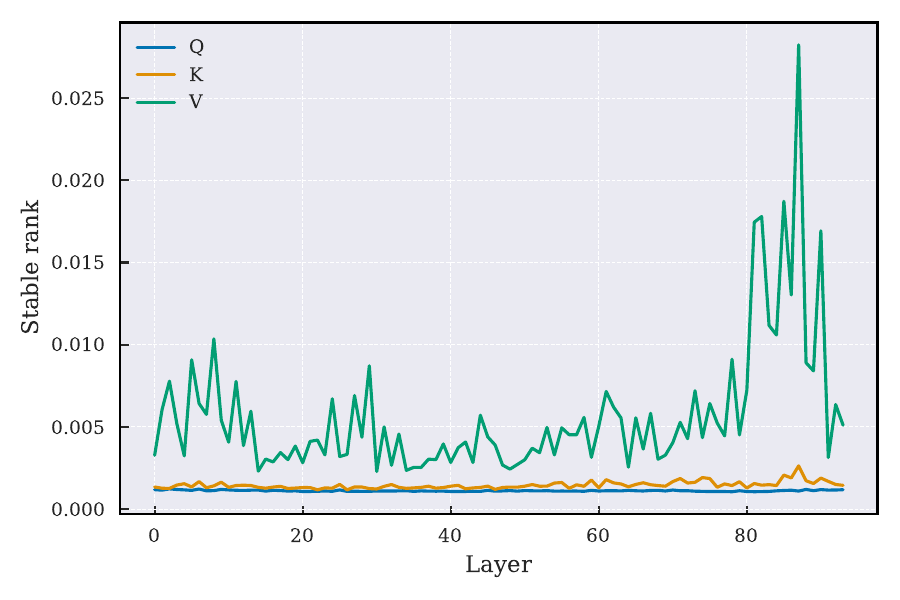}
        \caption*{\textsc{Qwen-3-235B}}
    \end{minipage}
    \vspace{-6pt}
    \caption{Stable rank distribution of the attention activations across \textsc{Qwen 3} models normalized by their maximum possible rank. }
    \label{fig:qwen}
\end{figure}

\begin{figure}[h]
    \centering
    \begin{minipage}[b]{0.32\linewidth}
        \centering
        \includegraphics[width=\linewidth]{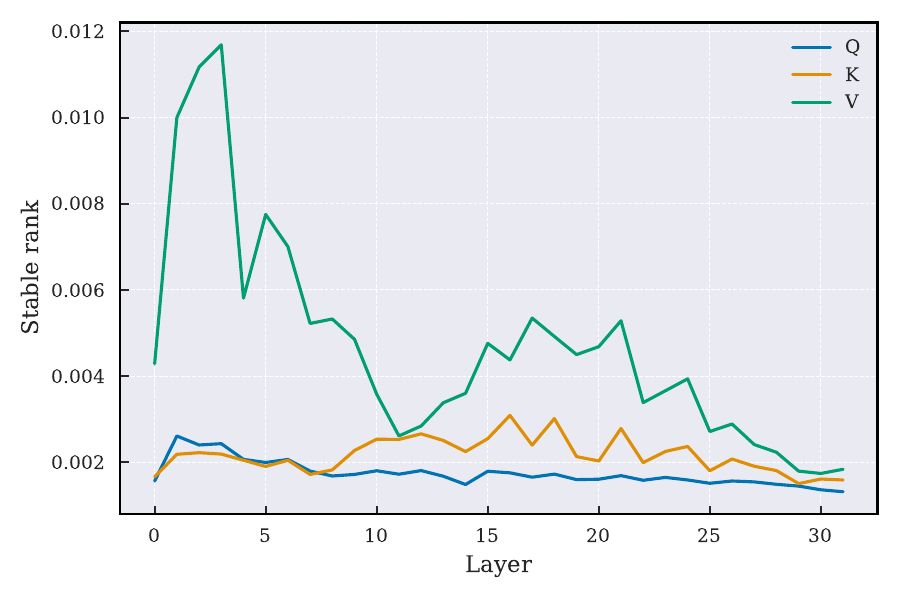}
        \caption*{\textsc{Olmo-2-7B}}
    \end{minipage}
    \hfill
    \begin{minipage}[b]{0.32\linewidth}
        \centering
        \includegraphics[width=\linewidth]{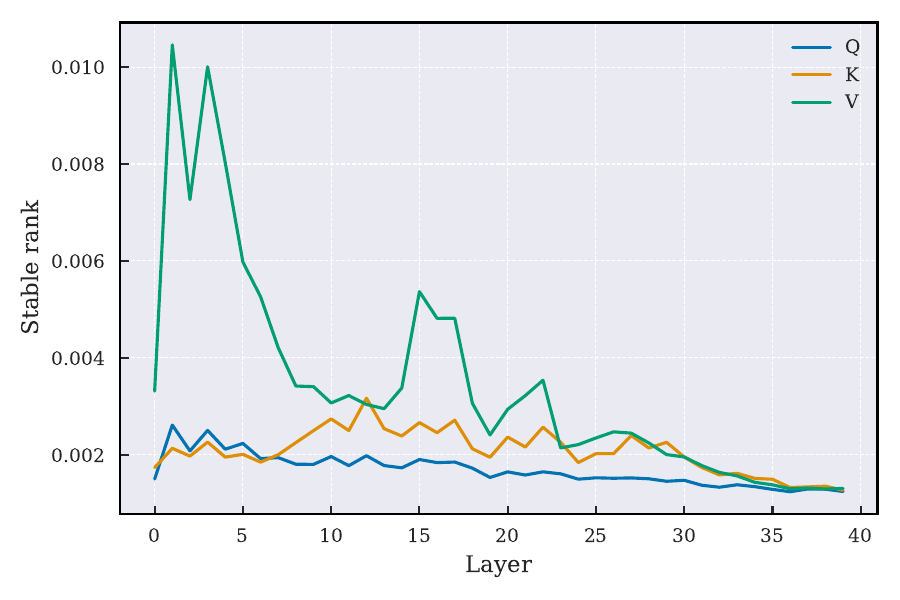}
        \caption*{\textsc{Olmo-2-13B}}
    \end{minipage}
    \hfill
    \begin{minipage}[b]{0.32\linewidth}
        \centering
        \includegraphics[width=\linewidth]{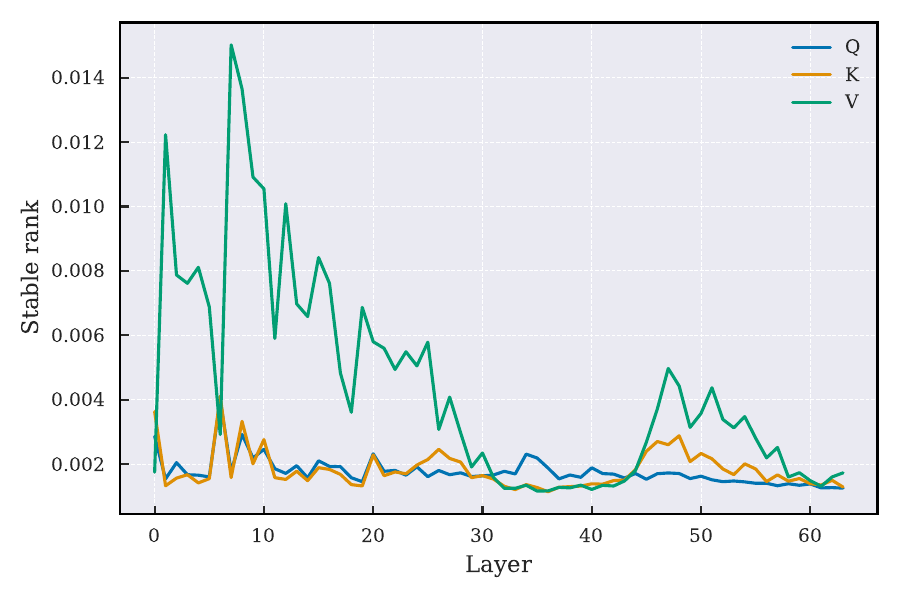}
        \caption*{\textsc{Olmo-2-32B}}
    \end{minipage}
    \vspace{-6pt}
    \caption{Stable rank distribution of the attention activations across \textsc{Olmo 2} models normalized by their maximum possible rank. }
    \label{fig:olmo}
\end{figure}

\section{Ablations}
\label{app:ablations}

We conduct ablation studies across different network architectures to validate that the effectiveness of our compression scheme is not dependent on specific design choices.

Figures~\ref{fig:8layers} and~\ref{fig:32layers} show the convergence behavior of 8-layer and 32-layer models, respectively, both using 8 attention heads. Similarly, Figures~\ref{fig:8heads} and~\ref{fig:24heads} compare convergence between models with 8 and 24 attention heads, respectively; both models have 8 layers. All four configurations use an embedding dimension of 2048.

Across all these variations, our compression method maintains convergence comparable to that of the centralized baseline, while achieving throughput competitive with centralized context-parallel models using 100Gbps connections—even though we only utilize 300Mbps connections.

In addition, we trained a model with an ultra-long context length of 256K tokens, distributed over 32 A100 GPUs. This model has 8 layers and 8 attention heads. As shown in Figure~\ref{fig:256h}, even in this extreme setting, our compression technique enables decentralized training to match the convergence of the centralized baseline.

Note that we plot the loss curves against iterations (not wall-clock time) and hence, 300mbps decentralized (uncompressed) and 100Gbps centralized curves overlap. Throughput is reported next to the curves separately.

\begin{figure}[h]
  \centering
  \begin{minipage}[t]{0.48\textwidth}
    \centering
    \includegraphics[width=\linewidth]{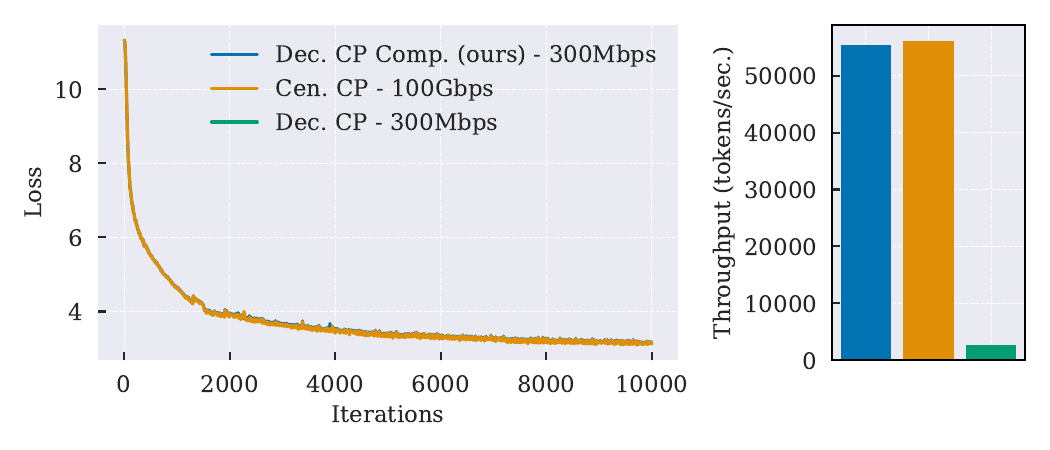}
    \caption{Loss over training iterations for 8-layer models (800M).}
    \label{fig:8layers}
  \end{minipage}
  \hfill
  \begin{minipage}[t]{0.48\textwidth}
    \centering
    \includegraphics[width=\linewidth]{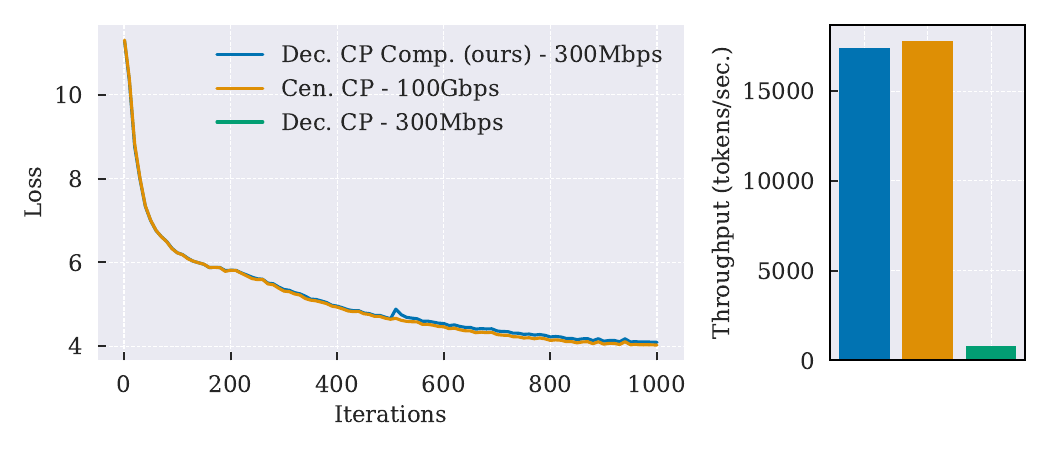}
    \caption{Loss over training iterations for 32-layer models (3B).}
    \label{fig:32layers}
  \end{minipage}
\end{figure}

\begin{figure}[h]
  \centering
  \begin{minipage}[t]{0.48\textwidth}
    \centering
    \includegraphics[width=\linewidth]{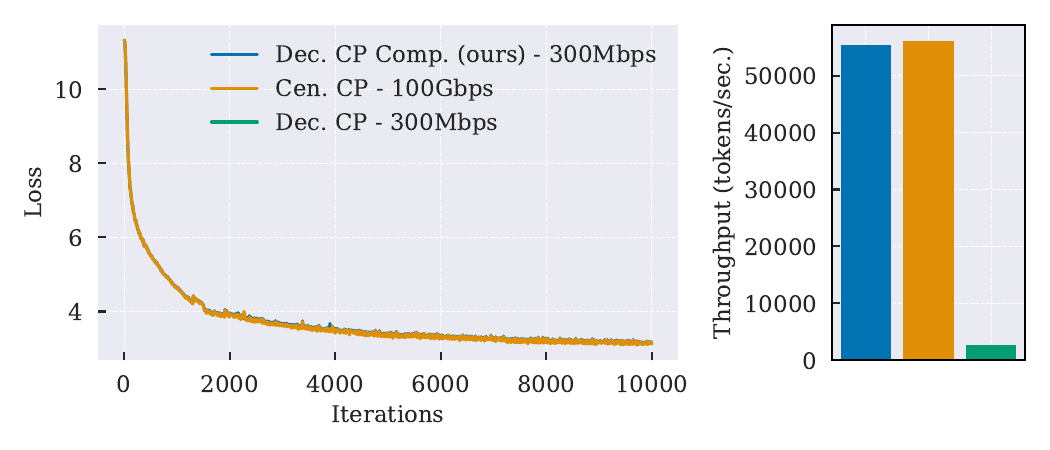}
    \caption{Loss over training iterations for 8-head models (800M).}
    \label{fig:8heads}
  \end{minipage}
  \hfill
  \begin{minipage}[t]{0.48\textwidth}
    \centering
    \includegraphics[width=\linewidth]{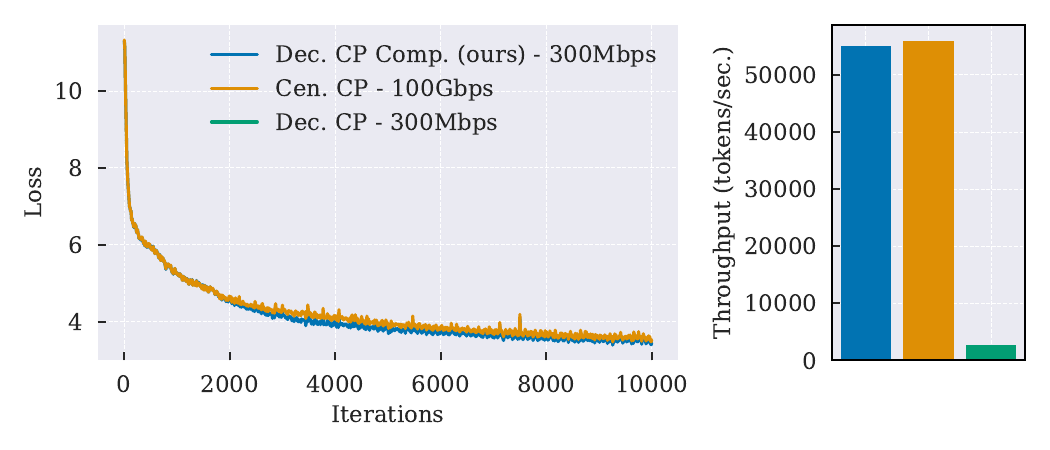}
    \caption{Loss over training iterations for 24-head models (2.5B).}
    \label{fig:24heads}
  \end{minipage}
\end{figure}

\begin{figure}[h]
    \centering
    \includegraphics[width=0.7\linewidth]{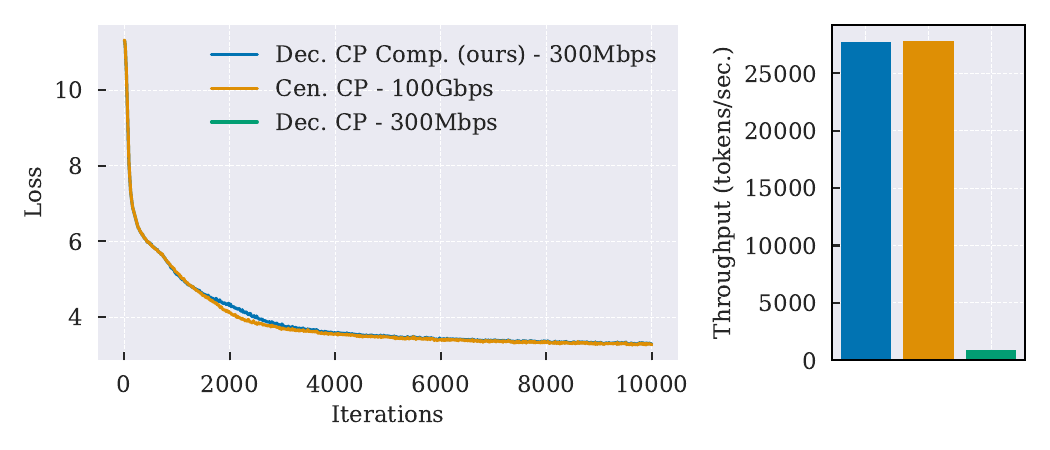}
    \caption{\textbf{Loss over training iterations for 256K context length training.} Our compression preserves the convergence with longer context lengths.}
    \label{fig:256h}
\end{figure}

\begin{figure}[h]
    \centering
    \includegraphics[width=0.7\linewidth]{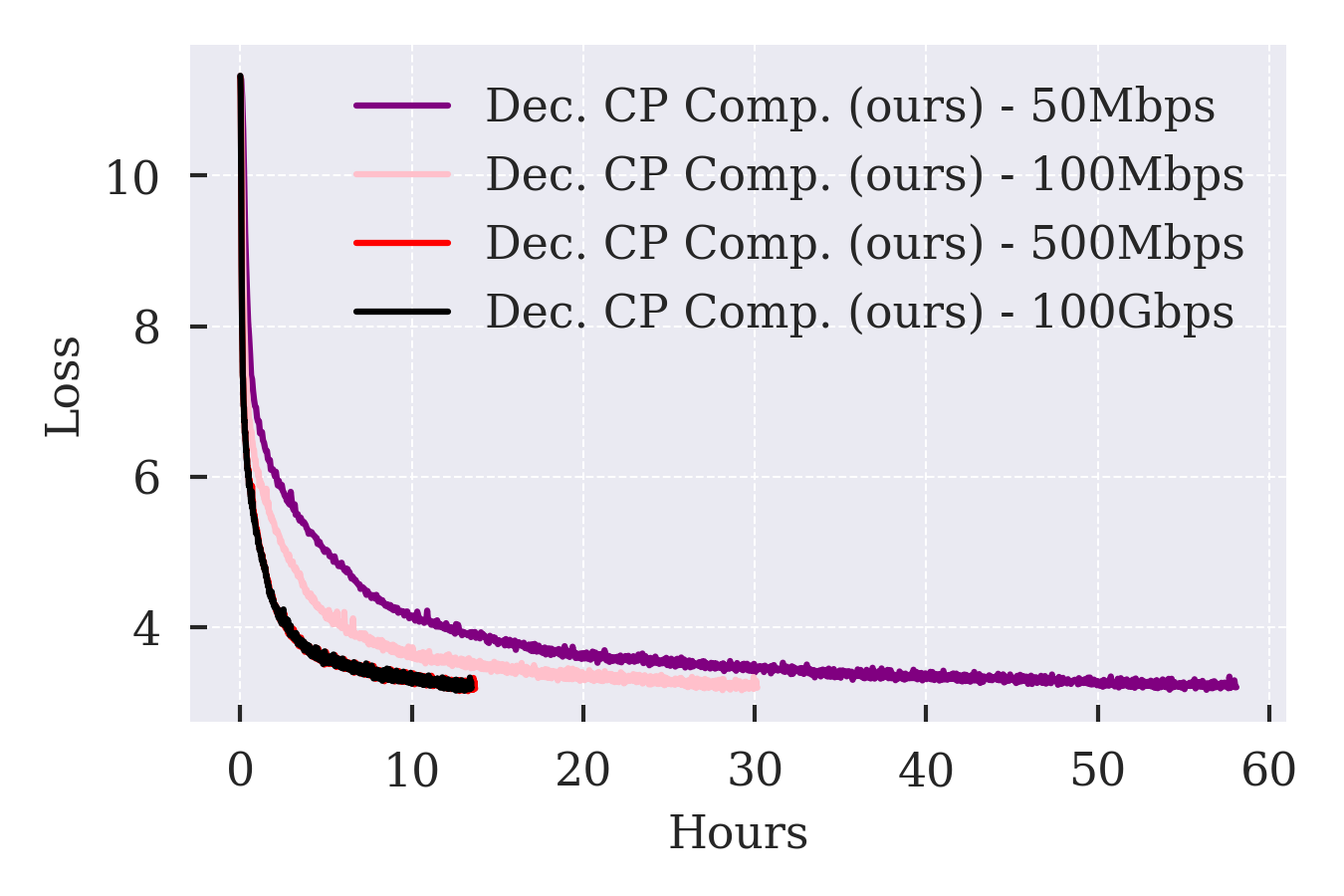}
    \caption{\textbf{Loss over wall-clock time across different bandwidths.} With our compression, the decentralized models achieve the convergence rate of centralized models at lower bandwidths. }
    \label{fig:bandwidth}
\end{figure}

\begin{table}[h!]
\centering
\caption{Effect of Unplugging Step on Perplexity (PPL)}
\begin{tabular}{l c}
\toprule
\textbf{Unplugging Step} & \textbf{PPL} \\
\midrule
Not unplugged & 22.64 \\
5k             & 22.69 \\
7k             & 22.64 \\
8k             & 22.71 \\
9k             & 22.71 \\
9.5k           & 22.77 \\
9.9k           & 23.12 \\
\bottomrule
\end{tabular}
\label{tab:unplugging}
\end{table}

\subsection{Removing the projection components}

In our experiments, we found that removing the projection components around the last 30\%-5\% range of training consistently results in stable performance across diverse settings. Based on this observation, we recommend using the last 30\%-5\% range of training as a simple and effective heuristic. Further, even when the projection components are removed earlier in training, the model is able to recover quickly, indicating a degree of robustness to the exact transition point. Nonetheless, the last 30\% heuristic offers a practical, safer, and an automatable guideline. We present an ablation on FineWeb in Table \ref{tab:unplugging} to further illustrate this (total steps 10k). Variations correspond to the third decimal point in the validation loss in most cases.

\appendix

\end{document}